\DeclareRobustCommand\onedot{\futurelet\@let@token\@onedot}
\def\@onedot{\ifx\@let@token.\else.\null\fi\xspace}
\def\eg{\emph{e.g}\onedot} 
\def\ie{\emph{i.e}\onedot}
\def\wrt{w.r.t\onedot} 
\def\etal{\emph{et al}\onedot}
\newcommand{\mypar}[1]{\vspace{.5mm}\noindent \textit{#1}\ }
\newcommand{\duster}[0]{DUSt3R}
\newcommand{\master}[0]{MASt3R}
\newcommand{\masterm}[0]{MASt3R-M}
\newcommand{\orange}[1]{{\color{orange}#1}}
\newcommand{\I}[1]{I^{#1}} %
\newcommand{\cam}[1]{C^#1} %
\newcommand{\C}[1]{C^{#1}} %
\newcommand{\R}{\mathbb{R}} %
\newcommand{\D}[1]{D^{#1}} %
\newcommand{\X}[2]{X^{#1,#2}} %
\newcommand{\Xgt}[2]{\hat{X}^{#1,#2}} %
\newcommand{\z}{z}           %
\newcommand{\zgt}{\hat{z}}   %
\newcommand{\enc}{\text{Encoder}} 
\newcommand{\dec}{\text{Decoder}} 
\newcommand{\head}{\text{Head}_{\text{3D}}}
\newcommand{\headdesc}{\text{Head}_{\text{desc}}}
\newcommand{\lreg}{\ell_{\text{regr}}} %
\newcommand{\lconf}{\mathcal{L}_{\text{conf}}} %
\newcommand{\lmatch}{\mathcal{L}_{\text{match}}} %
\newcommand{\valid}{\mathcal{V}} %
\newcommand{\N}[1]{\text{NN}_{#1}} %
\newcommand{\M}{\mathcal{M}} %
\newcommand{\Mgt}{\hat{\mathcal{M}}} %
\newcommand{\MP}[1]{\mathcal{P}^{#1}} %
\newcommand{\W}[1]{W^{#1}} %
\DeclareMathOperator*{\argmin}{arg\,min}
\newcommand{\un}[0]{\underline }
\newtheorem{theorem}{Theorem}[section]
\newtheorem{lemma}[theorem]{Lemma}
\newtheorem{proposition}[theorem]{Proposition}
\newtheorem{corollary}[theorem]{Corollary}
\title{Grounding Image Matching in 3D with \master}
\authors{Vincent Leroy \authsep Yohann Cabon \authsep Jerome Revaud}
\website{https://github.com/naver/mast3r}
\begin{abstract}
Image Matching is a core component of all best-performing algorithms and pipelines in 3D vision. 
Yet despite matching being fundamentally a 3D problem, intrinsically linked to camera pose and scene geometry, it is typically treated as a 2D problem.
This makes sense as the goal of matching is to establish correspondences between 2D pixel fields, but also seems like a potentially hazardous choice.
In this work, we take a different stance and propose to cast matching as a 3D task with \duster{}, a recent and powerful 3D reconstruction framework based on Transformers. 
Based on pointmaps  regression, this method displayed impressive robustness in matching views with extreme viewpoint changes, yet with limited accuracy. 
We aim here to improve the matching capabilities of such an approach while preserving its robustness.  
We thus propose to augment the \duster{} network with a new head that outputs dense local features, trained with an additional matching loss.
We further address the issue of quadratic complexity of dense matching, which becomes prohibitively slow for downstream applications if not carefully treated. We introduce a fast reciprocal matching scheme that not only accelerates matching by orders of magnitude, but also comes with theoretical guarantees and, lastly, yields improved results.
Extensive experiments show that our approach, coined \master{}, significantly outperforms the state of the art on multiple matching tasks.
In particular, it beats the best published methods by 30\% (absolute improvement) in VCRE AUC on the extremely challenging Map-free localization dataset.
\end{abstract}
\begin{document}
\maketitle

\section{Introduction}
\label{sec:intro}

Being able to establish correspondences between pixels across different images of the same scene, %
denoted as \emph{image matching}, 
constitutes a core component of all 3D vision applications, spanning mapping~\cite{mur2015orb, orb-slam3}, localization~\cite{superglue, kapture2020}, navigation~\cite{chaplot2020learning}, photogrammetry~\cite{peppa2018archaeological, culturalheritage} and autonomous robotics in general~\cite{thrun2002probabilistic,ozyecsil2017survey}.
State-of-the-art methods for visual localization, for instance, overwhelmingly rely upon image matching during the offline mapping stage, \eg using COLMAP \cite{colmap}, as well as during the online localization step, typically using PnP~\cite{pnpransac}.
In this paper, we focus on this core task and aim at producing, given two images, a list of pairwise correspondences, denoted as \emph{matches}. 
In particular, we seek to output highly accurate and dense matches that are robust to viewpoint and illumination changes because these are, in the end, the limiting factor for real-world applications~\cite{hartley_zisserman}.

In the past, matching methods have traditionally been cast into a three-steps pipeline consisting of first extracting sparse and repeatable keypoints, then describing them with locally invariant features, and finally pairing the discrete set of keypoints by comparing their distance in the feature space.
This pipeline has several merits: keypoint detectors are precise under low-to-moderate illumination and viewpoint changes, and the sparsity of keypoints makes the problem computationally tractable, enabling very precise matching in milliseconds whenever the images are viewed under similar conditions.
This explains the success and persistence of SIFT~\cite{sift} in 3D reconstruction pipelines like COLMAP~\cite{colmap}.

Unfortunately, keypoint-based methods, by reducing matching to a bag-of-keypoint problem, discard the global geometric context of the correspondence task.
This makes them especially prone to errors in situation with repetitive patterns or low-texture areas, which are in fact ill-posed for local descriptors.
One way to remedy this is to introduce a global optimization strategy during the pairing step, typically leveraging some learned priors about matching, which SuperGlue and similar methods successfully implemented~\cite{superglue,lightglue}.
However, leveraging global context during matching might be too late, if keypoints and their descriptors do not already encode enough information.
For this reason, another direction is to consider dense holistic matching, \ie avoiding keypoints altogether, and matching the entire image at once. 
This recently became possible with the advent of mechanism for global attention~\cite{transformer}. %
Such approaches, like LoFTR~\cite{sun2021loftr}, thus consider images as a whole and the resulting set of correspondences is dense and more robust to repetitive patterns and low-texture areas~\cite{sun2021loftr, jiang2021cotr,revaud16deepmatching, epicflow}.
This led to new state-of-the-art results on the most challenging benchmarks, such as the Map-free localization benchmark~\cite{mapfree}. 

Nevertheless, even a top-performing methods like %
LoFTR~\cite{sun2021loftr} score a relatively disappointing VCRE precision of 34\% on the Map-free localization benchmark.
We argue that this is because, so far, practically all matching approaches have been treating matching as a 2D problem in image space. 
In reality, the formulation of the matching task is intrinsically and fundamentally a 3D problem: pixels that correspond are pixels that observe the same 3D point.  %
Indeed, 2D pixel correspondences and a relative camera pose in 3D space are two sides of the same coin, as they are directly related by the epipolar matrix~\cite{hartley_zisserman}.
Another evidence is that the current top-performer on the Map-free benchmark is \duster{}~\cite{dust3r}, a method initially designed for 3D reconstruction rather than matching, and for which matches are only a by-product of the 3D reconstruction.
Yet, correspondences obtained naively from this 3D output currently outperform all other keypoint- and matching-based methods on the Map-free benchmark.

In this paper, we point out that, while \duster{}~\cite{dust3r} can indeed be used for matching, it is relatively  imprecise, despite being extremely robust to viewpoint changes. %
To remedy this flaw, we propose to attach a second head that regresses dense local feature maps, and train it with an InfoNCE loss.
The resulting architecture, called \master{} for ``Matching And Stereo 3D Reconstruction'' outperforms \duster{} on multiple benchmarks.
To get pixel-accurate matches, we propose a coarse-to-fine matching scheme %
during which matching is performed at several scales.
Each matching step involves extracting reciprocal matches from dense feature maps which, perhaps counter-intuitively, is by far more time consuming than computing the dense feature maps themselves. %
Our proposed solution is a faster algorithm for finding reciprocal matches that is almost two orders of magnitude faster while improving the pose estimation quality.

To summarize, we claim three main contributions.
First, we propose \master{}, a 3D-aware matching approach building on the recently released \duster{} framework. It outputs local feature maps that enable highly accurate and extremely robust matching.
Second, we propose a coarse-to-fine matching scheme associated with a fast matching algorithm, enabling to work with high-resolution images.
Third, \master{} significantly outperform the state-of-the-art on several absolute and relative pose localization benchmarks.

\section{Related works}

\mypar{Keypoint-based matching} has been a cornerstone of computer vision. 
Matching is carried out in three distinct stages: keypoint detection, locally invariant description and nearest-neighbor search in descriptor space.
Departing from the former handcrafted methods like SIFT~\cite{sift,rublee11orb}, modern approaches have been shifting towards learning-based data-driven schemes for detecting keypoints~\cite{laguna19keynet,mishkin18repeatability,verdie15tilde,zhang17discriminative}, describing them~\cite{balntas17hpatches,he18local,tian19sosnet, germain20s2dnet} or both at the same time~\cite{ma21survey,revaud19r2d2,wang21p2net,bhowmik20reinforced,superpoint,aslfeatCVPR20}. 
Overall, keypoint-based approaches are predominant in many benchmarks~\cite{balntas17hpatches, schoenberger17,ltlv,jin19}, underscoring their enduring value in tasks requiring high precision and speed~\cite{csurka18,schoenberger17}. %
One notable issue, however, is they reduce matching to a local problem, \ie discarding its holistic nature.
SuperGlue and similar approaches~\cite{superglue, lightglue} thus propose to perform global reasoning in the last pairing step leveraging stronger priors to guide matching, yet leaving the detection and description local. %
While successful, it is still limited by the local nature of keypoints and their inability to remain invariant to strong viewpoint changes.

\mypar{Dense matching.}
In contrast to keypoint-based approaches, semi-dense~\cite{sun2021loftr, jiang2021cotr, bokman22, aspanformer22,tang2022quadtree, pats} and dense approaches~\cite{edstedt2023dkm,roma,dgcnet18, glunet, truong21,pmatch,pdcnetp,dfm21} offer a different paradigm for establishing image correspondences, considering all 
possible pixel associations. 
Very reminiscent of optical flow approaches~\cite{raft, flowformer, flowformerpp, videoflow, dong2023rethinking, gma21}, they are usually employing coarse-to-fine schemes to decrease computational complexity. %
Overall, these methods aim to consider matching from a global perspective, %
at the cost of increased computational resources. 
Dense matching has proven effective in scenarios where detailed spatial relationships and textures are critical for understanding scene geometry, leading to top performance on many benchmarks~\cite{imc22,imc23,mapfree,wxbs,sun2021loftr,superglue} %
that are especially challenging for keypoints due to extreme changes in viewpoint or illumination. 
These approaches still cast matching as a 2D problem, which limits their usage for visual localization.

\mypar{Camera Pose estimation}
techniques vary widely, but the most successful strategies, for speed, accuracy and robustness trade-off, are fundamentally based on pixel matching~\cite{colmap, hloc, vsfm}. 
The constant improvement of matching methods has fostered the introduction of more challenging camera pose estimation benchmarks, such as Aachen Day-Night, InLoc, CO3D or Map-free~\cite{inloc,aachen,co3d,mapfree}, all featuring strong viewpoint and/or illumination changes.
The most challenging of them is undoubtedly Map-free~\cite{mapfree}, a localization dataset for which a single reference image is provided but no map, with viewpoint changes up to 180$^\circ$.

\mypar{Grounding matching in 3D}
thus becomes a crucial necessity in these challenging conditions where classical 2D-based matching utterly falls short.
Leveraging priors about the physical properties of the scene in order to improve accuracy or robustness has been widely explored in the past,
but most previous works settle for leveraging epipolar constraints for semi-supervised learning of correspondences without any fundamental change~\cite{yang19dataadaptive,patch2pix,yao19monet,caps_epipolar_2020,epipolar_vit_2020,bhalgat_epipolar_2023,kloepfer_scenes_2024,inductive_bias_epipolar_2022}. 
Toft \etal~\cite{mono_depth_helps20}, on its part, propose to improve keypoint descriptors by rectifying images with perspective transformations obtained from an off-the-shelf monocular depth predictor. 
Recently, diffusion for pose~\cite{posediffusion} or rays~\cite{raydiffusion}, although not matching approaches strictly speaking, show promising performance by incorporating 3D geometric constraints into their pose estimation formulation. %
Finally, the recent~\duster{}~\cite{dust3r} explore the possibility of recovering correspondences from the \emph{a-priori} harder task of 3D reconstruction from uncalibrated images.
Despite not being trained explicitly for matching, this approach yields promising results, topping the Map-free leaderboard~\cite{mapfree}.
Our contribution is to pursue this idea, by regressing local features and explicitly training them for pairwise matching.

\section{Method}

\begin{figure*}[t]
    \centering
    \includegraphics[width=\linewidth, trim=0 348 0 0, clip]{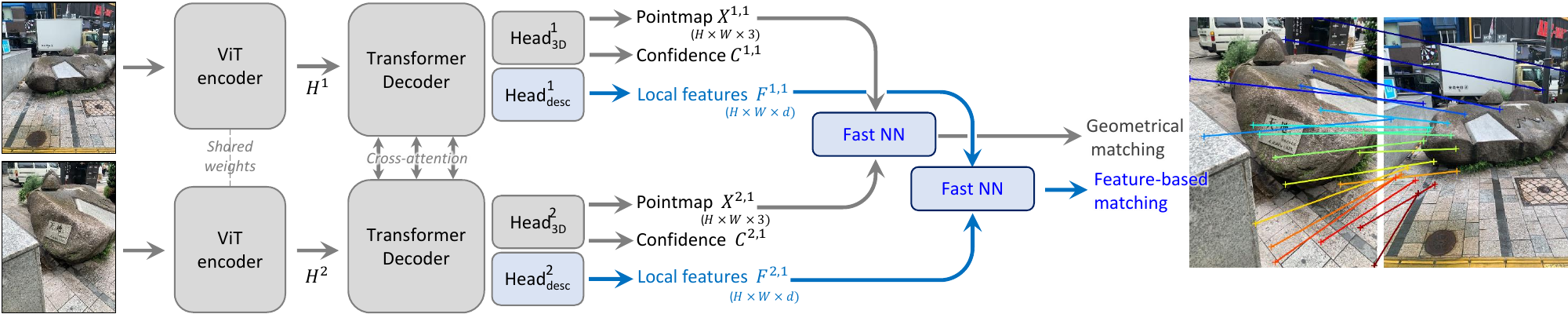}
    \caption{
        \text{Overview of the proposed approach.}
        Given two input images to match, our network regresses for each image and each input pixel a 3D point, a confidence value and a local feature.
        Plugging either 3D points or local features into our fast reciprocal NN matcher (\ref{sub:fast}) yields robust correspondences.
        Compared to the \duster{} framework which we build upon, our contributions are highlighted in \color{blue}{blue}.
    }
    \label{fig:overview}
\end{figure*}

Given two images $\I{1}$ and $\I{2}$, respectively captured by two cameras $\cam{1}$ and $\cam{2}$ with unknown %
parameters, we wish to recover a set of pixel correspondences $\{(i,j)\}$ where $i,j$ are pixels $i=(u_i,v_i), j=(u_j,v_j) \in \{1,\ldots,W\}\times \{1,\ldots,H\}$, $W,H$ being the respective width and height of the images. We assume they have the same resolution for the sake of simplicity, yet without loss of generality. The final network can handle pairs of variable aspect ratios.

Our approach, illustrated in \cref{fig:overview}, aims at jointly performing 3D scene reconstruction and matching given two input images.
It is based on the \duster{} framework recently proposed by Wang \etal~\cite{dust3r}, which we first review in \cref{sub:duster} before presenting our proposed matching head and its corresponding loss in \cref{sub:head}.
We then introduce an optimized matching scheme specially devised to deal with dense feature maps in \ref{sub:fast},  that we use for coarse-to-fine matching in \cref{sub:c2f}.

\subsection{The \duster{} framework}
\label{sub:duster}

\duster~\cite{dust3r} is a recently proposed approach that jointly solves the calibration and 3D reconstruction problems from images alone. 
A transformer-based network predicts a \emph{local} 3D reconstruction given two input images, in the form of two dense 3D point-clouds $\X{1}{1}$ and $\X{2}{1}$, denoted as \emph{pointmaps} in the following.

A pointmap $\X{a}{b} \in \R^{H \times W \times 3}$ represents a dense 2D-to-3D mapping between each pixel $i=(u,v)$ of the image $\I{a}$ and its corresponding 3D point $\X{a}{b}_{u,v} \in \R^3$ expressed in the coordinate system of camera $\cam{b}$.
By regressing two pointmaps $\X{1}{1}, \X{2}{1}$ expressed in the \emph{same} coordinate system of camera $\cam{1}$, \duster{} effectively solves the joint calibration and 3D reconstruction problem.
In the case where more than two images are provided, a second step of global alignment merges all pointmaps in the same coordinate system. Note that, in this paper, we do not make use of this step and restrict ourselves to the binocular case. %
We now explain the inference in more details.

Both images are first encoded in a Siamese manner with a ViT~\cite{vit}, yielding two representations $H^1$ and $H^2$:
\begin{align}
    H^1 & = \enc(\I1), \\
    H^2 & = \enc(\I2). 
\end{align} 
Then, two intertwined decoders process these representations jointly, exchanging information via cross-attention to `understand' the spatial relationship between viewpoints and the global 3D geometry of the scene. The new representations augmented with this spatial information are denoted as $H^1$ and $H^2$:
\begin{equation}
    H'^1, H'^2 = \dec(H^1, H^2).
\end{equation} 
Finally, two prediction heads regress the final pointmaps and confidence maps from the concatenated representations output by the encoder and decoder:
\begin{align}
    \X{1}{1}, \C1 & = \head^1([H^1,H'^1]), \\
    \X{2}{1}, \C2 & = \head^2([H^2,H'^2]).
\end{align} 

\mypar{Regression loss.}
\duster{} is trained in a fully-supervised manner using a simple regression loss
\begin{equation}
    \lreg(v,i) = \left\Vert \frac{1}{\z}\X{v}{1}_{i}  - \frac{1}{\zgt}\Xgt{v}{1}_{i} \right\Vert, \label{eq:reg}
\end{equation} 
where $v \in \{1,2\}$ is the view and $i$ is a pixel for which the ground-truth 3D point $\Xgt{v}{1} \in \R^3$ is defined.
In the original formulation, normalizing factors $\z, \zgt$ are introduced to make the reconstruction invariant to scale.
These are simply defined as the mean distance of all valid 3D points to the origin. %

\mypar{Metric predictions.}
In this work, we note that scale invariance is not necessarily desirable, as some potential use-cases like map-free visual localization necessitates metric-scale predictions.
Therefore, we modify the regression loss to ignore normalization for the predicted pointmaps when the ground-truth pointmaps are known to be metric.
That is, we set $\z := \zgt$ whenever ground-truth is metric, so that $\lreg(v,i) = || \X{v}{1}_{i}  - \Xgt{v}{1}_{i} || / \zgt$ in this case.
As in \duster~\cite{dust3r}, the final confidence-aware regression loss is defined as 
\begin{equation}
    \lconf = \sum_{v \in \{1,2\}} \, \sum_{i \in \valid^v} 
            \C{v}_i \lreg(v,i) - \alpha \log \C{v}_i.
\end{equation} 

\subsection{Matching prediction head and loss}
\label{sub:head}

To obtain reliable pixel correspondences from pointmaps, a standard solution is to look for reciprocal matches in some invariant feature space \cite{dust3r,d2net,sethi87,wu07insitu}.
While such a scheme works remarkably well with \duster{}'s regressed pointmaps (\ie in a 3-dimensional space) even in presence of extreme viewpoint changes, we note that the resulting correspondences are rather imprecise, yielding suboptimal accuracy.
This is a rather natural result as (i) regression is inherently affected by noise, and (ii) because \duster{} was never explicitly trained for matching.

\mypar{Matching head.}
For these reasons, we propose to add a second head that outputs two dense feature maps $\D{1}$ and $\D{2} \in \R^{H \times W \times d}$ of dimensional $d$:
\begin{align}
    \D{1} & = \headdesc^1([H^1,H'^1]), \\
    \D{2} & = \headdesc^2([H^2,H'^2]).
\end{align} 
We implement the head as a simple 2-layers MLP interleaved with a non-linear GELU activation function~\cite{gelu}. 
Lastly, we normalize each local feature to unit norm.
More details can be found in the supplementary material.

\mypar{Matching objective.}
We wish to encourage each local descriptor from one image to match with at most a single descriptor from the other image that represents the same 3D point in the scene.
To that aim, we leverage the infoNCE~\cite{infonce} loss over the set of ground-truth correspondences $\Mgt=\{(i,j)|\Xgt{1}{1}_i = \Xgt{2}{1}_j\}$:
\begin{align}
    \lmatch & = - \sum_{(i,j)\in \Mgt} 
        \log\frac{s_\tau(i,j)}{\sum_{k\in \MP1} s_\tau(k,j)} +
        \log\frac{s_\tau(i,j)}{\sum_{k\in \MP2} s_\tau(i,k)}, \label{eq:matching_loss} \\
    & \text{with } s_\tau(i,j) = \exp\left[ -\tau \D{1\top}_i \D2_j \right]. %
\end{align} 
Here, $\MP1 = \{i | (i,j)\in\Mgt\}$ and $\MP2 = \{j | (i,j)\in\Mgt\}$ denote the subset of considered pixels in each image and $\tau$ is a temperature hyper-parameter.
Note that this matching objective is essentially a cross-entropy \emph{classification} loss: contrary to regression in~\cref{eq:reg}, the network is only rewarded if it gets the correct pixel right, not a nearby pixel. 
This strongly encourages the network to achieve high-precision matching.
Finally, both regression and matching losses are combined to get the final training objective:
\begin{equation}
    \mathcal{L}_{\text{total}} = \lconf + \beta \lmatch
\end{equation} 

\subsection{Fast reciprocal matching}
\label{sub:fast}

Given two predicted feature maps $\D{1}, \D{2} \in \R^{H \times W \times d}$, we aim to extract a set of reliable pixel correspondences, \ie mutual nearest neighbors of each others:
\begin{align}
    \M = \{(i,j) \ | \  j=\N{2}(\D{1}_i) \text{ and } i=\N{1}(\D{2}_j) \}, \label{eq:corres}\\
    \text{with } \N{A}(\D{B}_j) = \argmin_{i} \left\Vert \D{A}_i - \D{B}_j \right\Vert. \label{eq:nn}
\end{align} 

Unfortunately, naive implementation of reciprocal matching has a high computational complexity of $O(W^2 H^2)$, since every pixel from an image must be compared to every pixels in the other image.
While optimizing the nearest-neighbor (NN) search is possible, \eg using K-d trees~\cite{scipy}, this kind of optimization becomes typically very inefficient in high dimensional feature space and, in all cases, orders of magnitude slower than the inference time of \master{} to output $\D1$ and $\D2$. %

\mypar{Fast matching.}
We therefore propose a faster approach based on sub-sampling.
It is based on an iterated process that starts from an initial sparse set of $k$ pixels $U^0=\{U^0_n\}_{n=1}^{k}$, typically sampled regularly on a grid in the first image $\I1$.
Each pixel is then mapped to its NN on $\I2$, yielding $V^1$, and the resulting pixels are mapped back again to $\I1$ in the same way:
\begin{equation}
    U^t \longmapsto [ \N{2}(\D{1}_u) ]_{u \in U^t} \equiv V^t \longmapsto [ \N{1}(\D{2}_v) ]_{v \in V^t} \equiv U^{t+1}
\end{equation}
The set of reciprocal matches (those which form a cycle, \ie $\M_k^t = \{ (U^t_n, V^t_n) \ |$ $\ U^t_n=U^{t+1}_n \}$) are then collected.
For the next iteration, pixels that already converged are filtered out, \ie updating $U^{t+1} := U^{t+1} \setminus U^t$.
Likewise, starting from $t=1$ we also verify and filter $V^{t+1}$, comparing it with $V^t$ in a similar fashion.
As illustrated in \cref{fig:recip} (left), this process is then iterated a fixed number of times, until most correspondences converge to stable (reciprocal) pairs. 
In~\cref{fig:recip} (center), we show that the number of un-converged point $|U^t|$ rapidly decreases to zero after a few iterations.
Finally, the output set of correspondences consists of the concatenation of all reciprocal pairs $\M_k = \bigcup_t \M_k^t$.

\mypar{Theoretical guarantees.}
The overall complexity of the fast matching is $O(kWH)$, which is $WH/k \gg 1$ times faster than the naive approach denoted \emph{all}, as illustrated in \cref{fig:recip} (right).
It is worth pointing out that our fast matching algorithm extracts a \emph{subset} of the full set $\M$, which is bounded in size by $|\M_k| \leq k$.
We study in the supplementary material the convergence guarantees of this algorithm and how it evinces outlier-filtering properties, which explains why the end accuracy is actually \emph{higher} than when using the full correspondence set $\M$, see \cref{fig:recip} (right).

\begin{figure*}[t]
    \centering
    \includegraphics[width=0.38\linewidth, trim=0 347 630 0, clip]{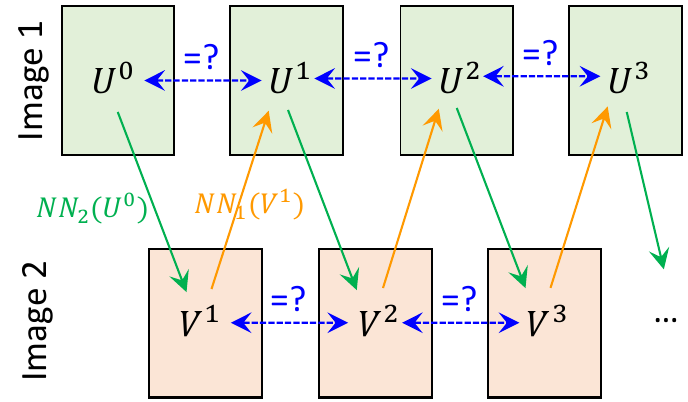}
    \includegraphics[width=0.3\linewidth]{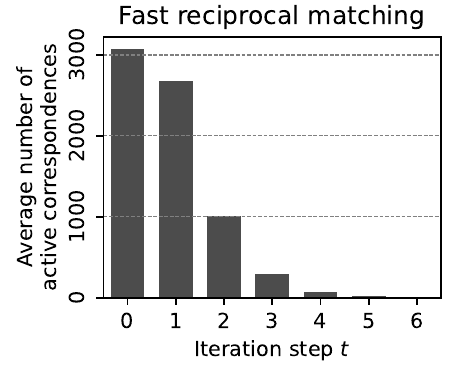}
    \includegraphics[width=0.3\linewidth]{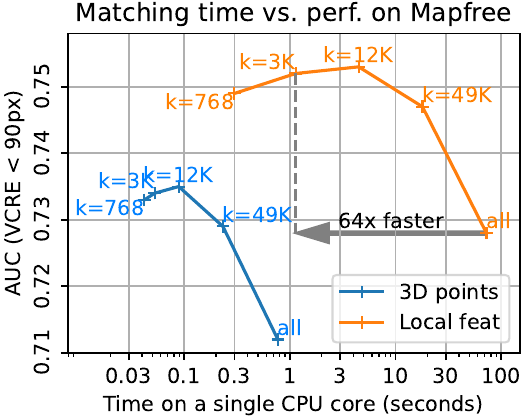}
    \caption{Fast reciprocal matching. 
        \textbf{Left}: Illustration of the fast matching process, starting from an initial subset of pixels $U^0$ and propagating it iteratively using $NN$ search. Searching for cycles (blue arrows) detect reciprocal correspondences and allows to accelerate the subsequent steps, by removing points that converged.
        \textbf{Center}: Average number of remaining points in $U^t$ at iteration $t=1\ldots6$.
        After only 5 iterations, nearly all points have already converged to a reciprocal match.
        \textbf{Right}: Performance-versus-time trade-off on the Map-free dataset. 
        Performance actually improves, along with matching speed, when performing moderate levels of subsampling.
    }
    \label{fig:recip}
\end{figure*}

\subsection{Coarse-to-fine matching}
\label{sub:c2f}

Due to the quadratic complexity of attention \wrt the input image area ($W\times H$), \master{} only handles images of 512 pixels in their largest dimension. Larger images would require significantly more compute power to train, and ViTs do not generalize yet to larger test-time resolutions~\cite{dpt,dino_v2}.   
As a result, high-resolution images (\eg 1M pixel) needs to be downscaled to be matched, afterwards the resulting correspondences are upscaled back to the original image resolution.
This can lead to some performance loss, sometimes sufficient to cause substantial degradation in term of localization accuracy or reconstruction quality.

\mypar{Coarse-to-fine matching}
is a standard technique to preserve the benefit of matching high-resolution images with a lower-resolution algorithm~\cite{raft, ranjan17}. We thus explore this idea for \master{}.
Our procedure starts with performing matching on downscaled versions of the two images. 
We denote the set of coarse correspondences obtained with subsampling $k$ as $\M^0_k$.
Next, we generate a grid of overlapping window crops $\W{1}$ and $\W{2} \in \R^{w \times 4}$ on each full-resolution image independently. 
Each window crop measures 512 pixels in its largest dimension and contiguous windows overlap by 50\%. 
We can then enumerate the set of all window pairs $(w_1,w_2) \in \W{1}\times \W{2}$, from which we select a subset covering most of the coarse correspondences $\M^0_k$.
Specifically, we add window pairs one by one in a greedy fashion until 90\% of correspondences are covered.
Finally, we perform matching for each window pair independently:
\begin{align}
    \D{w_1}, \D{w_2} & = \text{\master{}}(\I{1}_{w_1}, \I{2}_{w_2}) \\
    \M_k^{w_1,w_2} & = \text{fast\_reciprocal\_NN}(\D{w_1}, \D{w_2})
\end{align}\\ 
Correspondences obtained from each window pair are finally mapped back to the original image coordinates and concatenated, thus providing dense full-resolution matches.

\section{Experimental results}
\label{sec:xps}

We detail in~\cref{ssec:training} the training procedure of \master{}.
Then, we evaluate on several tasks, each time comparing with the state of the art, starting with visual camera pose estimation on the Map-Free Relocalization Benchmark~\cite{mapfree} (\cref{ssec:mapfree}), the CO3D and RealEstate datasets (\cref{ssec:relpose}) and other standard Visual Localization benchmarks in~\cref{ssec:visloc}. 
Finally, we leverage~\master{} for Dense Multi-View Stereo (MVS) reconstruction in~\cref{ssec:mvs}.
\subsection{Training}
\label{ssec:training}
\mypar{Training data}.
We train our network with a mixture of 14 datasets:
Habitat~\cite{Savva_2019_ICCV}, 
ARKitScenes~\cite{arkitscenes}, 
Blended MVS~\cite{blendedMVS}, 
MegaDepth~\cite{megadepth}, 
Static Scenes 3D~\cite{MIFDB16}, 
ScanNet++~\cite{scannet++}, 
CO3D-v2~\cite{co3d}, 
Waymo~\cite{Sun_2020_CVPR},
Map-free~\cite{mapfree}, 
WildRgb~\cite{wildrgb},
VirtualKitti~\cite{vkitti2},
Unreal4K~\cite{unreal4k},
TartanAir~\cite{tartanair2020iros} and
an internal dataset.
These datasets feature diverse scene types: indoor, outdoor, synthetic, real-world, object-centric, etc. 
Among them, 10 datasets have metric ground-truth.
When image pairs are not directly provided with the dataset, we extract them based on the method described in~\cite{croco_v2}. 
Specifically, we utilize off-the-shelf image retrieval and point matching algorithms to match and verify image pairs. 

\mypar{Training.}
We base our model architecture on the public \duster{} model~\cite{dust3r} and use the same backbone (ViT-Large encoder and ViT-Base decoder).
To benefit the most from \duster{}'s 3D matching abilities, we initialize the model weights to the publicly available \duster{} checkpoint. %
During each epoch, we randomly sample 650k pairs equally distributed between all datasets.
We train our network for 35 epoch with a cosine schedule and initial learning rate set to $0.0001$.
Similar to~\cite{dust3r}, we randomize the image aspect ratio at training time, ensuring that the largest image dimension is 512 pixels.
We set the local feature dimension to $d=24$ and the matching loss weight to $\beta=1$.
It is important that the network sees different scales at training time, because coarse-to-fine matching starts from zoomed-out images to then zoom-in on details (see \cref{sub:c2f}).
We therefore perform aggressive data augmentation during training in the form of random cropping.
Image crops are transformed with a homography to preserve the central position of the principal point. 

\mypar{Correspondence sampling.}
To generate ground-truth correspondences necessary for the matching loss (\cref{eq:matching_loss}), 
we simply find reciprocal correspondences between on the ground-truth 3D pointmaps $\Xgt{1}{1} \leftrightarrow \Xgt{2}{1}$.
We then randomly subsample 4096 correspondences per image pairs. 
If we cannot find enough correspondences, we pad with random false correspondences so that the likelihood of finding a true match remains constant.

\mypar{Fast nearest neighbors.}
For the fast reciprocal matching from \cref{sub:fast}, we implement the nearest neighbor function $\N{}(x)$ from \cref{eq:nn} differently depending on the dimension of $x$.
When matching 3D points $x \in \R^3$, we implement $\N{}(x)$ using K-d trees~\cite{kdtree}.
For matching local features with $d=24$, however, K-d trees become highly inefficient due to the curse of dimensionality~\cite{curse_dim}.
Therefore, we rely on the optimized FAISS library~\cite{faiss, faiss-gpu} in this case. %

\subsection{Map-free localization}
\label{ssec:mapfree}
\mypar{Dataset description.}
We start our experiments with the Map-free relocalization benchmark~\cite{mapfree}, an extremely challenging dataset aiming at localizing the camera in metric space given a single reference image without any map.
It comprises a training, validation and test sets of 460, 65 and 130 scenes resp., each featuring two video sequences. %
Following the benchmark, we evaluate in term of Virtual Correspondence Reprojection Error (VCRE) and camera pose accuracy, see~\cite{mapfree} for details.

\mypar{Impact of subsampling.}
We do not resort to coarse-to-fine matching for this dataset, as the image resolution is already close to \master{} working resolution ($720\times540$ vs. $512\times384$ resp.).
As mentioned in \cref{sub:fast}, computing dense reciprocal matching is prohibitively slow even with optimized code for searching nearest neighbors.
We therefore resort to subsampling the set of reciprocal correspondences, keeping at most $k$ correspondences from the complete set $\M$ (\cref{eq:corres}).
\cref{fig:recip} (right) shows the impact of subsampling in term of AUC (VCRE) performance and timing.
Surprisingly, the performance significantly \emph{improves} for intermediate values of subsampling.
Using $k=3000$, we can accelerate matching by a factor of 64 while significantly improving the performance.
We provide insights in the supplementary material regarding this phenomenon.
Unless stated otherwise, we keep $k=3000$ for subsequent experiments.

\mypar{Ablations on losses and matching modes.}
We report results on the validation set in \cref{tab:mapfree} for different variants of our approach:
\duster{} matching 3D points (I); \master{} also matching 3D points (II) or local features (III, IV, V). 
For all methods, we compute the relative pose from the essential matrix~\cite{hartley_zisserman} estimated with the set of predicted matches (PnP performs similarly).
The metric scene scale is inferred from the depth extracted with an off-the-shelf DPT finetuned on KITTI~\cite{dpt} (I-IV) or from the depth directly output by \master{} (V).

First, we note that all proposed methods significantly outperforms the \duster{} baseline, probably because \master{} is trained longer and with more data. 
All other things being equal, matching descriptors perform significantly better than matching 3D points (II versus IV). 
This confirms our initial analysis that regression is inherently unsuited to compute pixel correspondences, see \cref{sub:head}.

We also study the impact of training only with a single matching objective ($\lmatch$ from \cref{eq:matching_loss}, III).
In this case, the performance overall degrades compared to training with both 3D and matching losses (IV), in particular in term of pose estimation accuracy (\eg median rotation of $10.8^\circ$ for (III) compared to $3.0^\circ$ for (IV)).
We point out that this is in spite of the decoder now having \emph{more capacity to carry out a single task}, instead of two when performing 3D reconstruction simultaneously, indicating that grounding matching in 3D is indeed crucial to improve matching.
Lastly, we observe that, when using metric depth directly output by \master{}, the performance largely improves. 
This suggests that, as for matching, the depth prediction task is largely correlated with 3D scene understanding, and that the two tasks strongly benefit from each other.

\begin{table*}[t]
    \caption{Results on the \emph{validation} set of the Map-free dataset. (\textbf{First} and \un{second} best)}
    \label{tab:mapfree}
    \centering
    \resizebox{0.9\linewidth}{!}{
    \begin{tabular}{@{}cllcrlclclrllclc@{}}
    \toprule
      & & \multirow{2}{*}{\begin{turn}{90} \scriptsize match \end{turn}} &   &  \multicolumn{5}{c}{VCRE (<90px)} &  & \multicolumn{6}{c}{Pose Error} \\
      \cline{5-9} \cline{11-16}
      & & & depth &  Reproj. $\downarrow$ &  \quad\quad&Prec. $\uparrow$ &  \quad\quad&AUC $\uparrow$ &  \quad\quad\quad&\multicolumn{2}{c}{Med. Err. (m,°)}$\downarrow$ &  \quad\quad&Precision $\uparrow$ &  \quad\quad&AUC $\uparrow$ \\
    \midrule
     (I)&\duster{}  &3d& DPT &  125.8 px&  &45.2\% &  &0.704 &   &1.10m& 9.4°&  &17.0\% &  &0.344 \\
     (II)&\master{} &3d& DPT &  112.0 px&  &49.9\% &  &0.732 &   &0.94m& 3.6°&  &21.5\% &  &0.409 \\
     (III)&\masterm{}  &feat& DPT &  \un{107.7} px&  &\un{51.7\%} &  &0.744 &   &1.10m& 10.8°&  &19.3\% &  &0.382 \\
     (IV)&\master{}  &feat& DPT &  112.9 px&  &51.5\% &  &\un{0.752} &   &\un{0.93m}& \un{3.0}°&  &\un{23.2\%} &  &\un{0.435} \\
    \midrule
     (V)&\master{}  &feat& (auto) &  \textbf{57.2} px&  &\textbf{75.9\%} &  &\textbf{0.934} &   &\textbf{0.46m}& \textbf{3.0}°&  &\textbf{51.7\%} &  &\textbf{0.746} \\
    \bottomrule
    \end{tabular}
    }
\end{table*}

\mypar{Comparisons}
on the test set is reported in \cref{tab:mapfree_sota}.
Overall, \master{} outperforms all state-of-the-art approaches by a large margin, achieving more than 
93\% in VCRE AUC.
This is a 30\% absolute improvement compared to the second best published method, LoFTR+KBR~\cite{sun2021loftr,spencer2024cribstv}, that get 63.4\% in AUC. 
Likewise, the median translation error is vastly reduced to 36cm, compared to approx. 2m for the state-of-the-art methods. 
A large part of the improvement is of course due to \master{} predicting metric depth, but note that our variant leveraging depth from DPT-KITTI (thus purely matching-based) outperforms all state-of-the-art approaches as well.

We also provide the results of direct regression with \master{}, \ie without matching, simply using PnP on the pointmap $\X{2}{1}$ of the second image.
These results are surprisingly on par with our matching-based variant, even though the ground-truth calibration of the reference camera is not used. 
As we show below, this does not hold true for other localization datasets, and computing the pose via matching (\eg with PnP or essential matrix) with known intrinsics seems safer in general.

\begin{table*}[t]
    \caption{Comparison with the state of the art on the \emph{test} set of the Map-free dataset.}
    \label{tab:mapfree_sota}
    \centering
    \resizebox{0.9\linewidth}{!}{
    \begin{tabular}{@{}lcrlclclrllclc@{}}
    \toprule
     &  &  \multicolumn{5}{c}{VCRE (<90px)} &  & \multicolumn{6}{c}{Pose Error} \\
     \cline{3-7}  \cline{9-14}
     & depth & Reproj. $\downarrow$ &  \quad\quad&Prec. $\uparrow$ &  \quad\quad&AUC $\uparrow$ &  \quad\quad\quad&\multicolumn{2}{c}{Med. Err. (m,°)}$\downarrow$ &  \quad\quad&Precision $\uparrow$ &  \quad\quad&AUC $\uparrow$ \\
    \midrule
    RPR~\cite{mapfree} & DPT &  147.1 px&  &40.2\% &  &0.402 &   &1.68m& 22.5°&  &6.0\% &  &0.060 \\
    SIFT~\cite{sift} & DPT &  222.8 px&  &25.0\% &  &0.504 &   &2.93m& 61.4°&  &10.3\% &  &0.252 \\
    SP+SG~\cite{superglue} & DPT &  160.3 px&  &36.1\% &  &0.602 &   &1.88m& 25.4°&  &16.8\% &  &0.346 \\
    LoFTR~\cite{sun2021loftr} & KBR &  165.0 px&  &34.3\% &  &0.634 &   &2.23m& 37.8°&  &11.0\% &  &0.295 \\
    \duster{}~\cite{dust3r} & DPT &  116.0 px&  &50.3\% &  &0.697 &   &0.97m& 7.1°&  &21.6\% &  &0.394 \\
    \midrule
    \master{} & DPT &  104.0 px&  &54.2\% &  &0.726 &   &0.80m& \bf 2.2°&  &27.0\% &  &0.456 \\
    \master{} & (auto) &  \textbf{48.7} px&  &\textbf{79.3\%} &  &0.933 &   &\textbf{0.36}m& \textbf{2.2}°&  &\textbf{54.7\%} &  &0.740 \\
    \multicolumn{2}{l}{\master{} (direct reg.)} &  53.2 px&  &79.1\% &  &\textbf{0.941} &   &0.42m& 3.1°&  &53.0\% &  &\textbf{0.777} \\
    \bottomrule
    \end{tabular}
    }
\end{table*}

\mypar{Qualitative results.}
We show in \cref{fig:quali} some matching results for pairs with strong viewpoint change (up to 180$^\circ$). 
We also highlight with insets some specific regions that are correctly matched by \master{} in spite of drastic appearance changes. 
We believe these correspondences to be nearly impossible to get with 2D-based matching methods. 
In contrast, grounding the matching in 3D allows to solve the issue relatively straightforwardly.

\begin{figure*}[t]
    \centering
    \resizebox{\linewidth}{!}{
    \includegraphics[width=\linewidth]{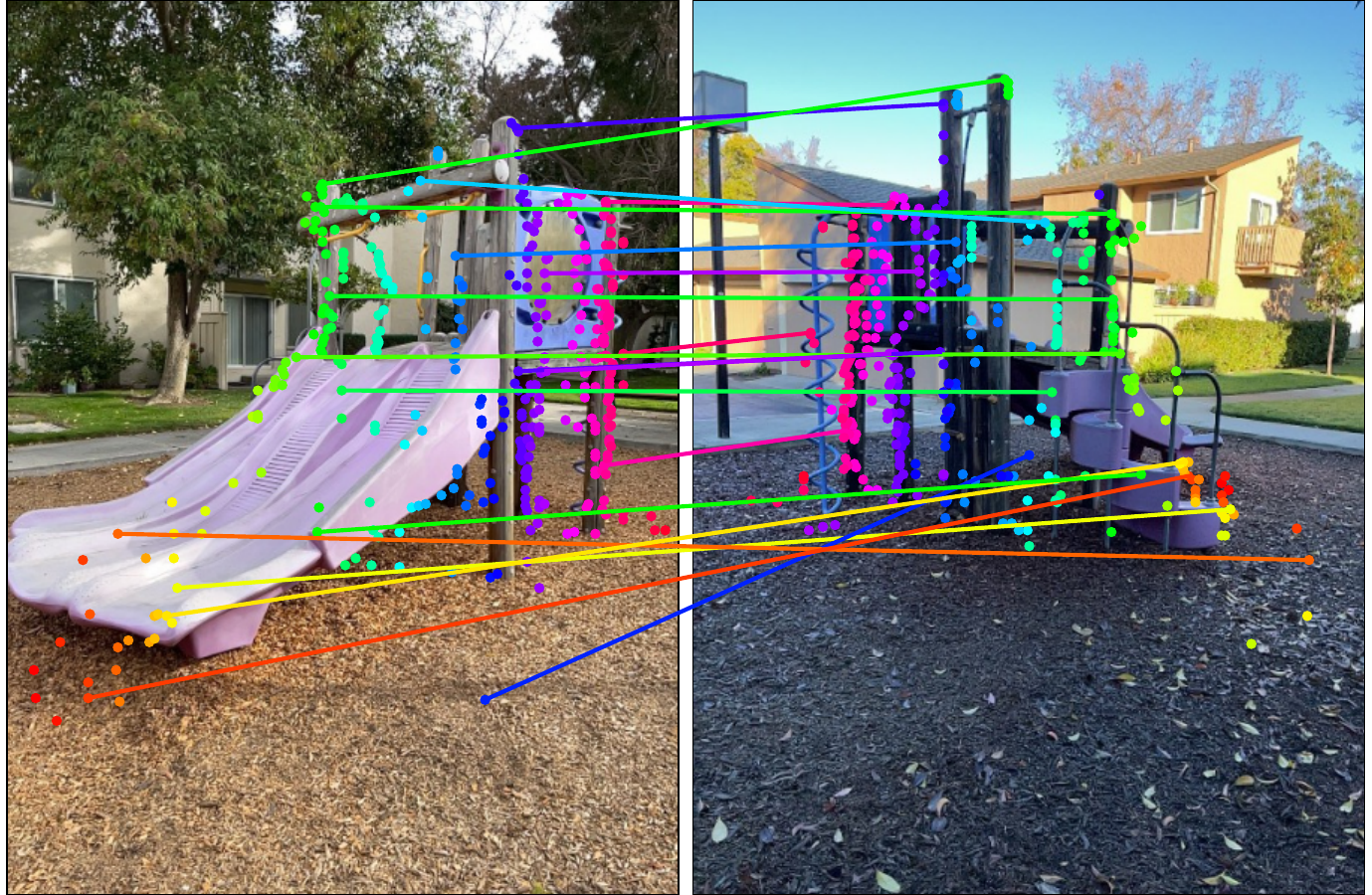}
    \quad
    \includegraphics[width=\linewidth]{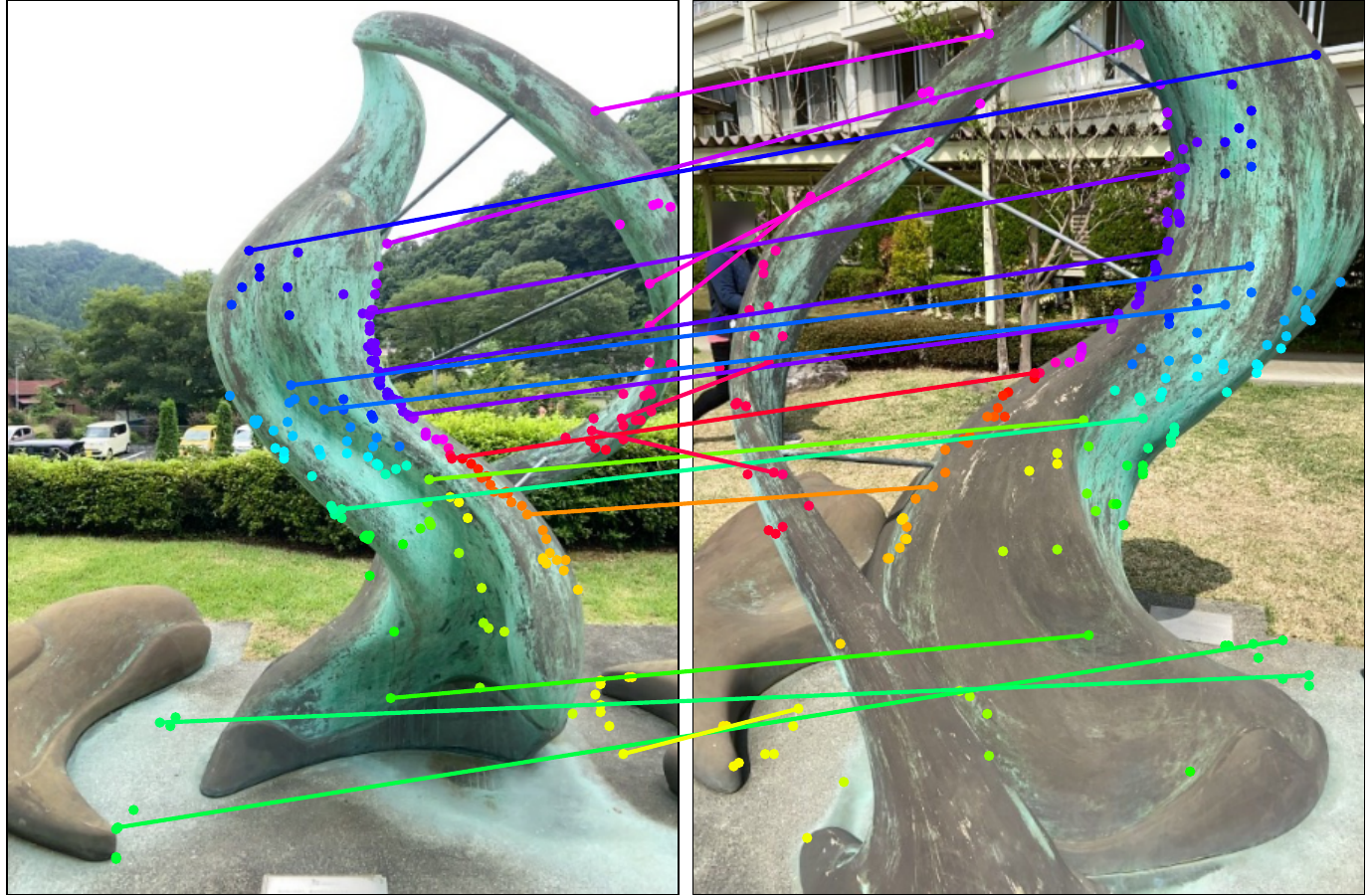}
    \quad
    \includegraphics[width=\linewidth]{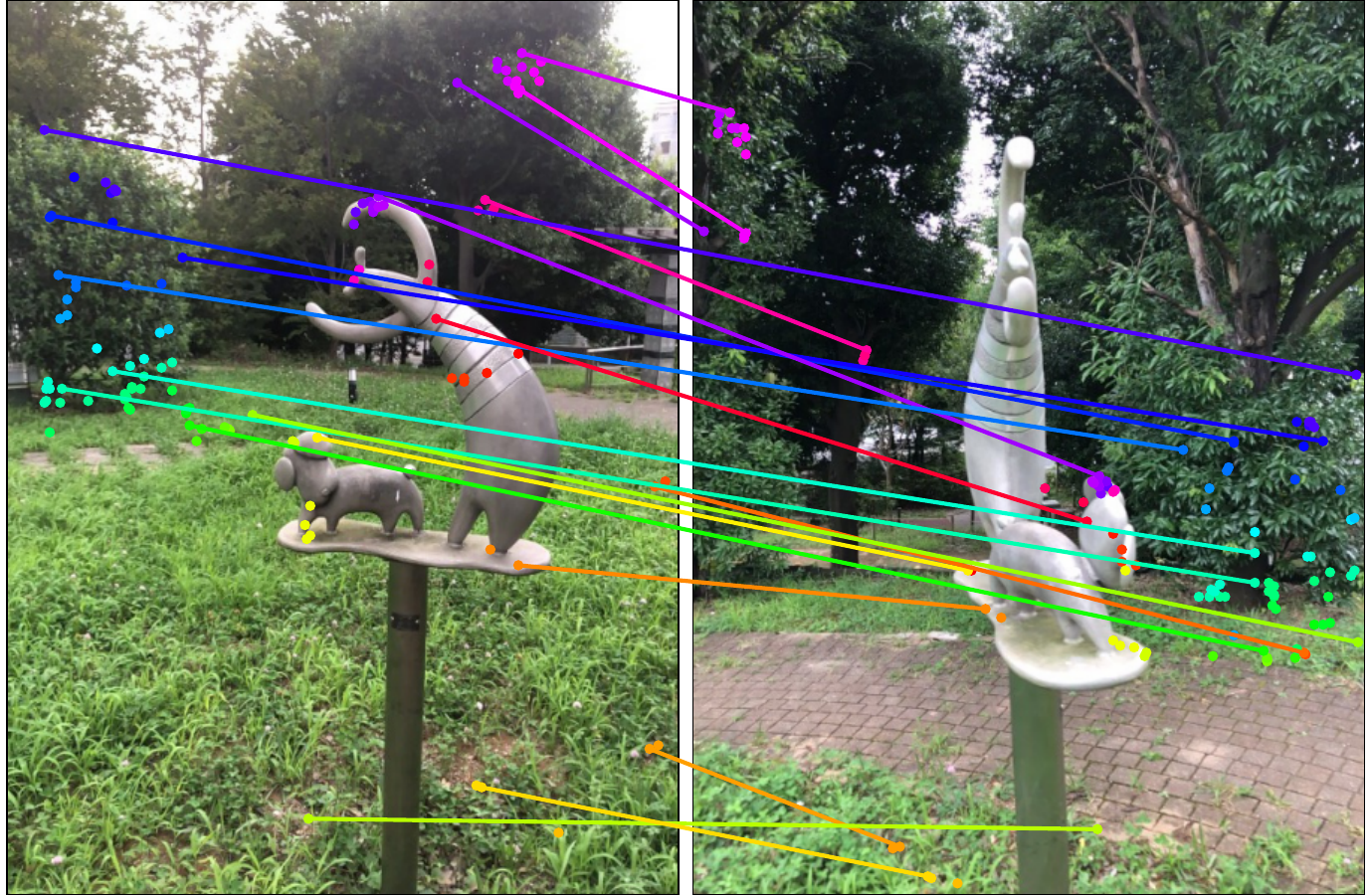}
    }
    \resizebox{\linewidth}{!}{
    \includegraphics[width=\linewidth]{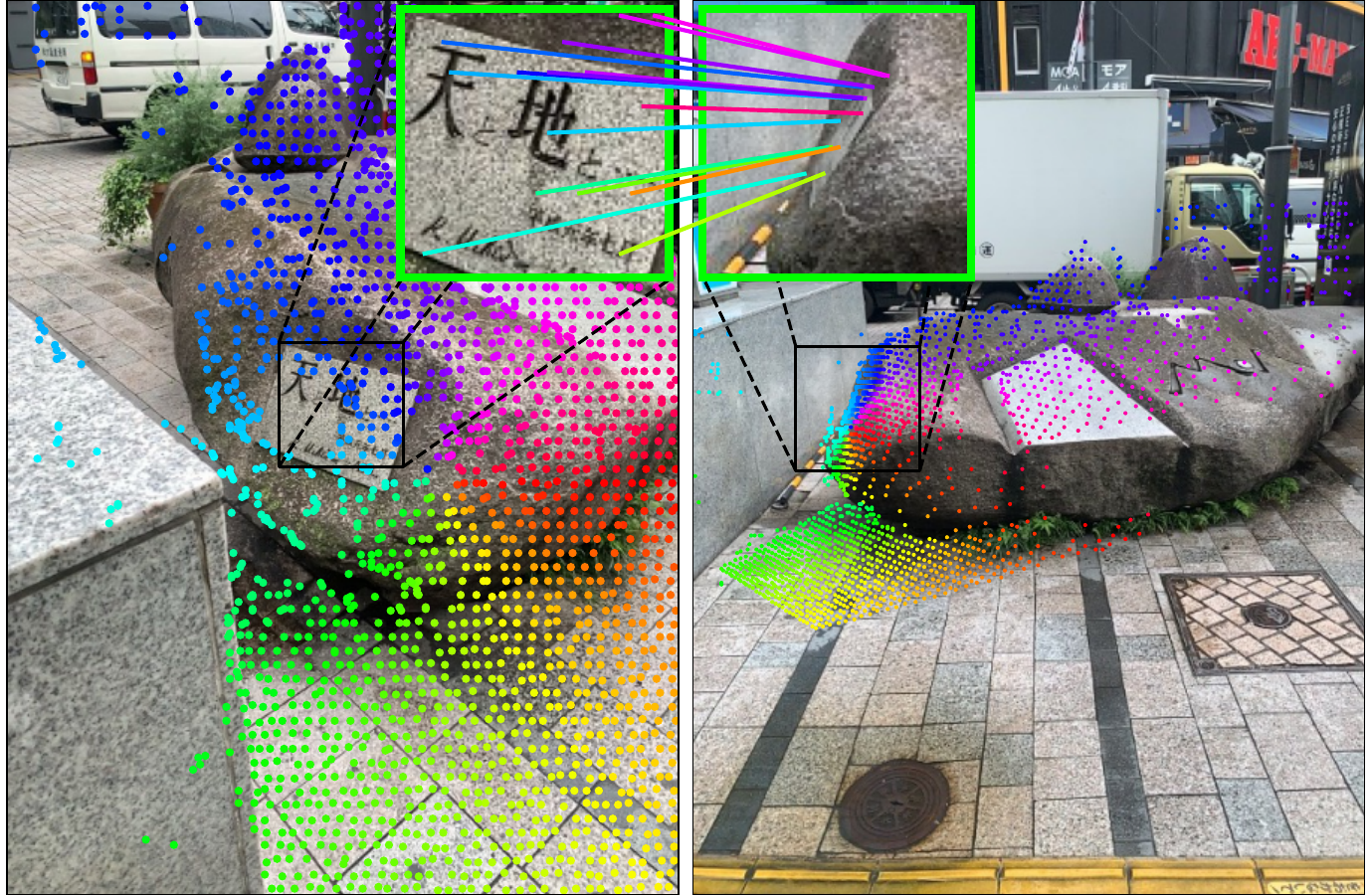}
    \quad
    \includegraphics[width=\linewidth]{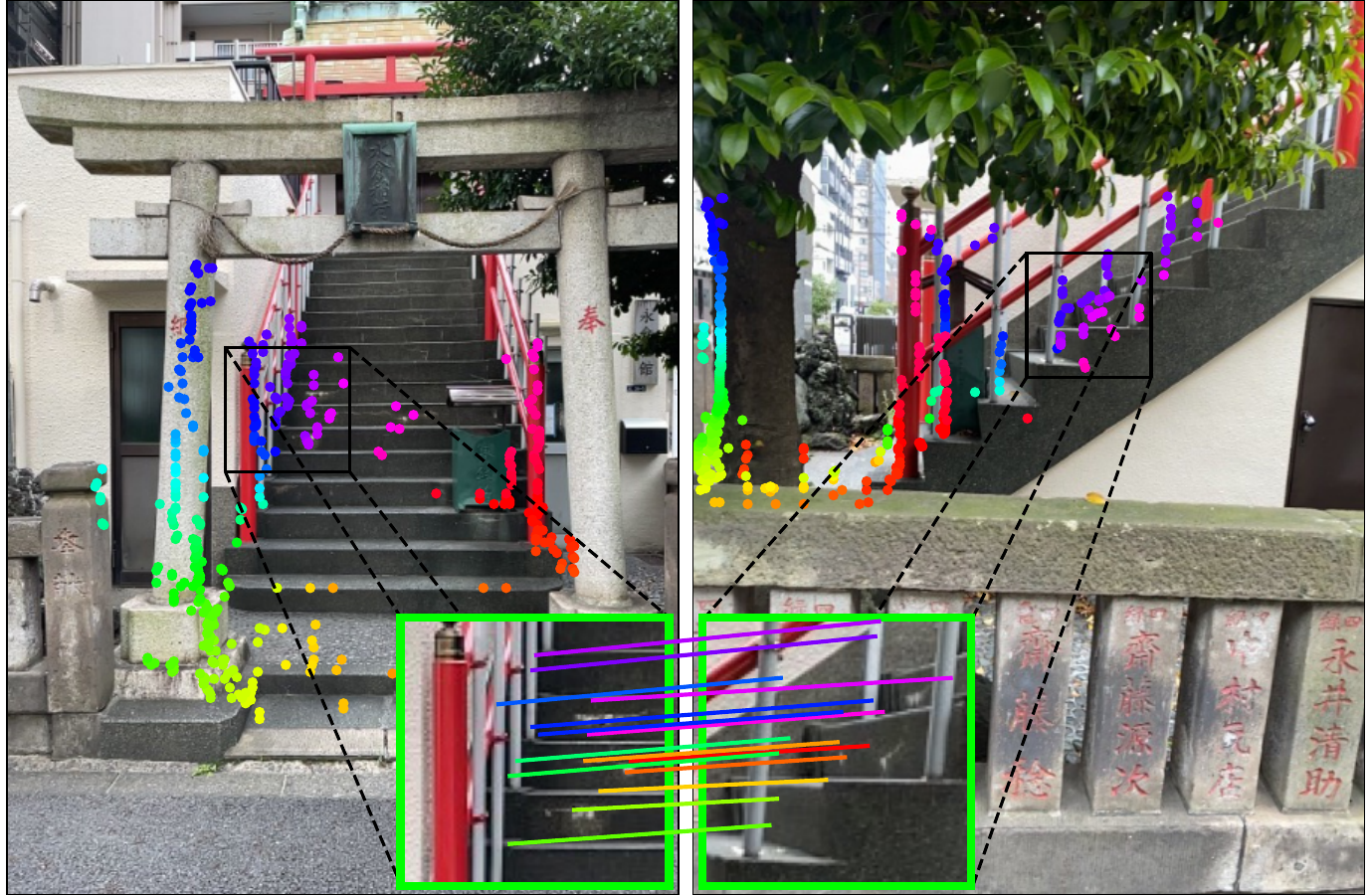}
    }
    \caption{
        Qualitative examples on the Map-free dataset.
        \textbf{Top row}: Pairs with strong viewpoint changes. Third one is a failure case.
        For clarity, we only draw a subset of all correspondences.
        \textbf{Bottom row}: We highlight interesting spots in close-up. These regions could hardly be matched by local keypoints. See text for details.
    }
    \label{fig:quali}
\end{figure*}

\subsection{Relative pose estimation}
\label{ssec:relpose}

\mypar{Datasets and protocol.}
Next, we evaluate for the task of relative pose estimation on the CO3Dv2~\cite{co3d} and RealEstate10k~\cite{realestate10K} datasets. CO3Dv2 contains 6 million frames extracted from approximately 37k videos, covering 51 MS-COCO categories. 
Ground-truth camera poses are obtained using COLMAP~\cite{colmap} from 200 frames in each video. 
RealEstate10k is an indoor/outdoor dataset that features 80K video clips on YouTube totalling 10 million frames, camera poses being obtained via SLAM with bundle adjustment. 
Following~\cite{posediffusion}, we evaluate \master{} on 41 categories from CO3Dv2 and 1.8K video clips from the test set of RealEstate10k. 
Each sequence is 10 frames long, we evaluate relative camera poses between all possible 45 pairs, not using ground-truth focals. %

\mypar{Baselines and metrics.} As before, matches obtained with \master{} are used to estimate Essential Matrices and relative pose. 
Please note that our predictions are always done pairwise, contrary to all other methods that leverage multiple views (at the exception of DUSt3R-PnP). 
We compare to recent data-driven approaches like RelPose~\cite{relpose}, RelPose++~\cite{relpose}, PoseReg and PoseDiff~\cite{posediffusion}, the recent RayDiff~\cite{raydiffusion} and \duster{}~\cite{dust3r}. 
We also report results for more traditional SfM methods like PixSFM~\cite{pixsfm} and COLMAP~\cite{colmapmvs} extended with SuperPoint~\cite{superpoint} and SuperGlue~\cite{superglue} (COLMAP+SPSG). 
Similar to~\cite{posediffusion}, we report the Relative Rotation Accuracy (RRA) and Relative Translation Accuracy (RTA) for each image pair to evaluate the relative pose error and select a threshold $\tau=15$ to report RTA@$15$ and RRA$@15$. Additionally, we calculate the mean Average Accuracy (mAA$30$), defined as the area under the accuracy curve of the angular differences at $min({\rm RRA}@30, {\rm RTA}@30)$.

\mypar{Results.} 
As shown in~\cref{tab:relpose_mvs}, SfM approaches tend to perform significantly worse on this task, mainly due to the poor visual support. 
This because images usually observe a small object, combined with the fact that many pairs have a wide baseline, sometimes up to 180$^\circ$. %
On the contrary, 3D grounded approaches like RayDiffusion, \duster{} and \master{} are the two most competitive methods on this dataset, the latter leading in translation and mAA on both datasets. Notably, on RealEstate our mAA score improves by at least $8.7$ points over the best multi-view methods and $15.2$ points over pairwise \duster{}. This showcases the accuracy and robustness of our approach to few input view setups. %

\begin{table*}[t]
    \begin{center}
    \small
    \caption{\emph{Left}: Multi-view pose regression on the CO3Dv2~\cite{co3d} and RealEstate10K~\cite{realestate10K} with 10 random frames. 
        Parenthesis () denote methods that do not report results on the 10 views set, we report their best for comparison (8 views). 
        We distinguish between (a) multi-view and (b) pairwise methods.
    \emph{Right:} Dense MVS results on the DTU dataset, in \emph{mm}. Handcrafted methods (c) perform worse than learning-based approaches (d) that train on this specific domain. Among the methods that operate in a zero-shot setting (e), \master{} is the only one attaining reasonable performance. 
    }
    \label{tab:relpose_mvs}
    \resizebox{0.53\textwidth}{!}{
    \begin{tabular}{llccccc}
    \toprule
    & \multirow{2}{*}{Methods}  & \multicolumn{3}{c}{Co3Dv2} &  & RealEstate10K\\ \cline{3-5} \cline{7-7} %
                           & \hspace{0.1pt} & RRA@15  & RTA@15 & mAA(30) &  & mAA(30)       \\ 
    \midrule
    \multirow{8}{*}{(a)} & Colmap+SG~\cite{superpoint, superglue} & 36.1    & 27.3   & 25.3    &  & 45.2          \\
    &PixSfM~\cite{pixsfm}                  & 33.7    & 32.9   & 30.1    &  & 49.4          \\
    &RelPose~\cite{relpose}              & 57.1    & -      & -       &  & -             \\
    &PosReg~\cite{posediffusion}           & 53.2    & 49.1   & 45.0    &  & -             \\
    &PoseDiff ~\cite{posediffusion}   & 80.5    & 79.8   & 66.5    &  & 48.0          \\
    &RelPose++~\cite{relposepp}           & (85.5)    & -      & -       &  & -             \\
    &RayDiff~\cite{raydiffusion}   & (93.3)    & -   & -    &  & -          \\ %
    &\duster-GA~\cite{dust3r}             & \textbf{96.2}    & 86.8   & 76.7    &  & 67.7          \\ \midrule
    \multirow{2}{*}{(b)}&\duster~\cite{dust3r}               & 94.3    & 88.4  & 77.2    &  & 61.2          \\
    &\bf{\master{}}            & 94.6   & \bf{91.9}   &  \bf{81.8} &  & \bf{76.4} \\ 
    \bottomrule
    \end{tabular}
}
\quad
\resizebox{0.32\linewidth}{!}{
    \begin{tabular}{llccc}
    \toprule
    & Methods &  Acc.$\downarrow$ & Comp.$\downarrow$ & Overall$\downarrow$       \\
    \midrule
    \multirow{4}{*}{(c)} & Camp~\cite{camp} &  0.835 & 0.554 & 0.695 \\
    &Furu~\cite{furu} & 0.613 & 0.941 & 0.777 \\
    &Tola~\cite{tola} & 0.342 & 1.190 & 0.766 \\
    &Gipuma~\cite{gipuma} & \textbf{ 0.283} & 0.873 & 0.578\\
    \midrule
    \multirow{7}{*}{(d)} &MVSNet~\cite{mvsnet} & 0.396 & 0.527 & 0.462 \\
    &CVP-MVSNet~\cite{cvp-mvsnet} & 0.296 & 0.406 & 0.351 \\
    &UCS-Net~\cite{ucs-net} &  0.338 & 0.349 & 0.344 \\
    & CER-MVS~\cite{cermvs}  & 0.359 & 0.305 & 0.332 \\
    & CIDER~\cite{cider} &  0.417 & 0.437 & 0.427 \\
    &PatchmatchNet~\cite{pathcmatchnet}  & 0.427 & 0.277 & 0.352 \\
    & GeoMVSNet~\cite{geomvsnet}  & 0.331 & \textbf{0.259} & \textbf{0.295} \\
    \midrule
    \multirow{2}{*}{(e)}& \duster~\cite{dust3r}  &   2.677  &  0.805  & 1.741  \\ %
    & \master &   0.403   &  0.344  &  0.374 \\ %
    \bottomrule
    \end{tabular}
}
\normalsize
\end{center}
\end{table*}

\subsection{Visual localization}
\label{ssec:visloc}

\mypar{Datasets.}
We then evaluate \master{} for the task of absolute pose estimation on the Aachen Day-Night\cite{aachen} and InLoc\cite{inloc} datasets. Aachen comprises 4,328 reference images taken with hand-held cameras, %
as well as 824 daytime and 98 nighttime query images taken with mobile phones in the old inner city of Aachen, Germany. 
InLoc\cite{inloc} is an indoor dataset with challenging appearance variation between the 9,972 RGB-D + 6DOF pose database images and the 329 query images taken from an iPhone 7.

\mypar{Metrics.} We report report the percentage of successfully localized images within three thresholds: (0.25m, 2°), (0.5m, 5°) and (5m, 10°) for Aachen and 
(0.25m, 10°), (0.5m, 10°), (1m, 10°) for InLoc.

\mypar{Results} are reported in Table \ref{tab:visloc}. We study the performance of \master{} with variable number of retrieved images. As expected, a greater number of retrieved images (top40) yields better performance, achieving competitive performance on Aachen and significantly outperforming the state of the art on InLoc. 
Interestingly, our approach still performs very well even with a single retrieved image (top1), showcasing the robustness of 3D grounded matching. 
We also include direct regression results, which are rather poor, showing a striking impact of the dataset scale on the localization error, \ie small scenes are much less affected (see results on Map-free in \ref{ssec:mapfree}).
This confirms the importance of feature matching to estimate reliable poses.

\begin{table*}[t]
    \caption{Visual localization results on Aachen Day-Night and InLoc. We report our results for different number of retrieved database images (topN).
    }
    \label{tab:visloc}
    \begin{center}
    \small
    \resizebox{.9\linewidth}{!}{
    \begin{tabular}{lccccc}
    \toprule
    \multirow{2}{*}{Methods}  & \multicolumn{2}{c}{AachenDayNight\cite{aachen}}& & \multicolumn{2}{c}{InLoc\cite{inloc}}\\ \cline{2-3} \cline{5-6} %
                            \hspace{0.1pt} & Day& Night& & DUC1& DUC2\\ \midrule
    Kapture+R2D2 \cite{kapture2020}             & 	 \textbf{91.3/97.0/99.5} & \textbf{78.5/91.6/100} &  & 		41.4/60.1/73.7    & 	47.3/67.2/73.3          \\
    SP+SuperGlue \cite{superglue}             & 89.8/{96.1}/99.4 & 77.0/90.6/\textbf{100} &  & 	49.0/68.7/80.8    & 53.4/77.1/82.4          \\ 
    SP+LightGlue \cite{lightglue}             & {90.2}/96.0/99.4 & 77.0/91.1/\textbf{100} &  & 	49.0/68.2/79.3    & 55.0/74.8/79.4          \\ 
    LoFTR \cite{sun2021loftr}             & 88.7/95.6/99.0 & \textbf{78.5}/90.6/99.0 &  & 	47.5/72.2/84.8    & 54.2/74.8/85.5          \\ 
    
    DKM \cite{edstedt2023dkm}             & - & - &  & 	51.5/75.3/86.9    & 63.4/82.4/87.8          \\
    
    \midrule

    \duster~top1~\cite{dust3r}             & 72.7/89.6/98.1 & 59.7/80.1/93.2 &  & 	36.4/55.1/66.7    & 27.5/42.7/49.6          \\ 
    \duster~top20~\cite{dust3r}             & 79.4/94.3/\textbf{99.5} & 74.9/91.1/99.0 &  & 	53.0/74.2/89.9    & 	61.8/77.1/84.0          \\ \midrule
    \bf{\master{}~top1}            & 79.6/93.5/98.7    & 70.2/88.0/97.4 &  &  41.9/64.1/73.2  & 38.9/55.7/62.6 \\ 
    \bf{\master{}~top20}            & 83.4/95.3/99.4    & 76.4/\textbf{91.6/100} &  &  55.1/77.8/90.4 & \textbf{71.0}/84.7/89.3\\ 
    \bf{\master{}~top40}            & 82.2/93.9/\textbf{99.5}    & 75.4/\textbf{91.6/100} &  &  \bf 56.1/\textbf{79.3/90.9} & \textbf{71.0/87.0/91.6} \\ 
    \bf{\master{}~direct reg.~top1}            & 1.5/4.5/60.7    & 1.6/4.2/47.6 &  & 13.1/32.3/58.1 & 	10.7/26.0/38.2 \\
    \bottomrule
    \end{tabular}}
    \normalsize
    \end{center}
\end{table*}

\subsection{Multiview 3D reconstruction}
\label{ssec:mvs}
We finally perform MVS by triangulating the obtained matches. Note that the matching is performed in full resolution without prior knowledge of cameras, and the latter are only used to triangulate matches in ground-truth reference frame. 
We remove spurious 3D points via geometric consistency post-processing~\cite{pathcmatchnet}. 

\mypar{Datasets and metrics.}
We evaluate our predictions on the DTU~\cite{dtu} dataset. Contrary to all competing learning methods, we apply our network in a zero-shot setting, \ie we do not train nor finetune on the DTU train set and apply our model as is.
In \cref{tab:relpose_mvs} we report the average accuracy, completeness and Chamfer distances error metrics as provided by the authors of the benchmarks. 
The accuracy for a point of the reconstructed shape is defined as the smallest Euclidean distance to the ground-truth, and
the completeness of a point of the ground-truth as the
smallest Euclidean distance to the reconstructed shape. 
The overall Chamfer distance is the average of both previous metrics. 

\mypar{Results.} Data-driven approaches trained on this domain significantly outperform handcrafted ones, cutting the Chamfer error by half. 
To the best of our knowledge, we are the first to draw such conclusion in a zero-shot setting. 
\master{} not only outperforms the \duster{} baseline but also compete with the best methods, all without leveraging camera calibration nor poses for matching, neither having seen this camera setup before.

\section{Conclusion}
Grounding image matching in 3D with \master{} significantly raised the bar on camera pose and localization tasks on many public benchmarks. 
We successfully improved \duster{} with matching, getting the best of both worlds: enhanced robustness, while attaining and even surpassing what could be done with pixel matching alone. 
We introduced a fast reciprocal matcher and a coarse to fine approach for efficient processing, allowing users to balance between accuracy and speed. 
\master{} is able to perform in few-view regimes (even in top1), that we believe will greatly increase versatility of localization. 

\clearpage
\LARGE
\chapter{\textbf{Appendix}
\vspace{1cm}
\normalsize

\appendix
In this appendix, we first present additional qualitative examples on various tasks in~\cref{sec:quali},
followed by  a proof of convergence of the fast reciprocal matching algorithm and an in-depth study of the related performance gains in~\cref{sec:reciprocal}. 
We finally show an ablative study concerning the impact of \emph{coarse-to-fine} matching in~\cref{sec:coarse-to-fine-abl}. 

\begin{figure}[h!]
    \centering
    \includegraphics[width=.95\linewidth]{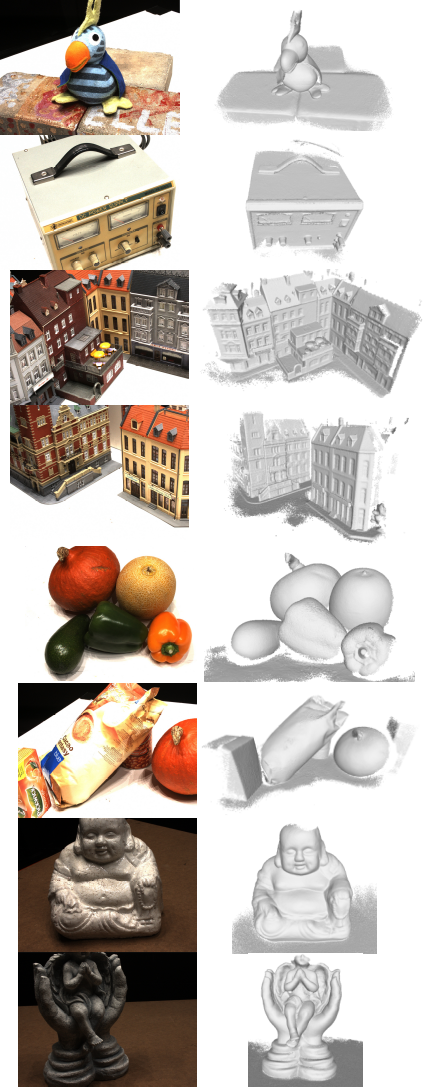}
    \caption{
        Qualitative MVS results on the DTU dataset~\cite{dtu} simply obtained by triangulating the dense matches from \master. 
    }
    \label{fig:quali_dtu}
    \vspace{-5mm}
\end{figure}

\begin{figure*}[h!]
    \centering
    \includegraphics[width=0.49\linewidth]{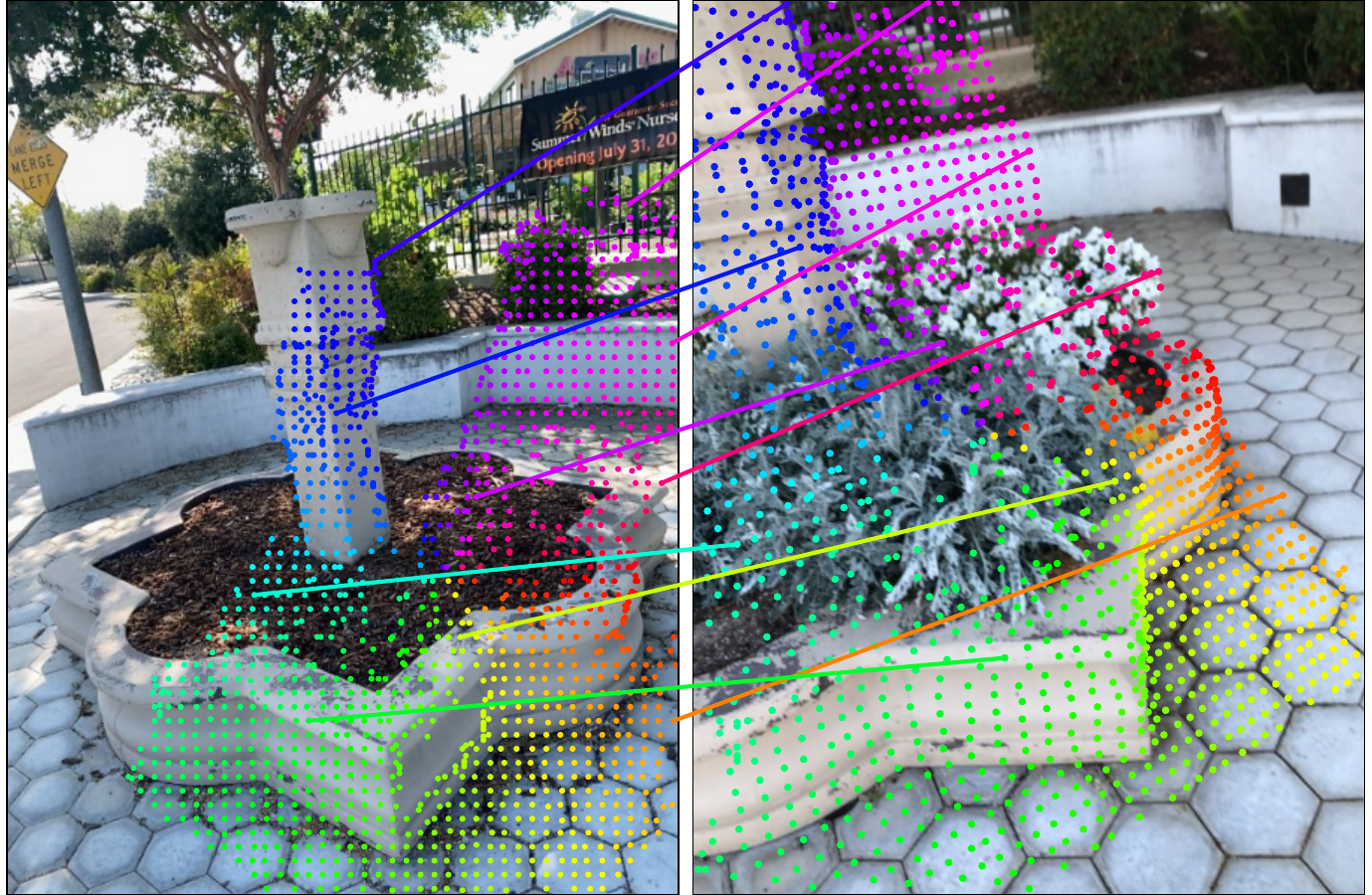}
    \includegraphics[width=0.49\linewidth]{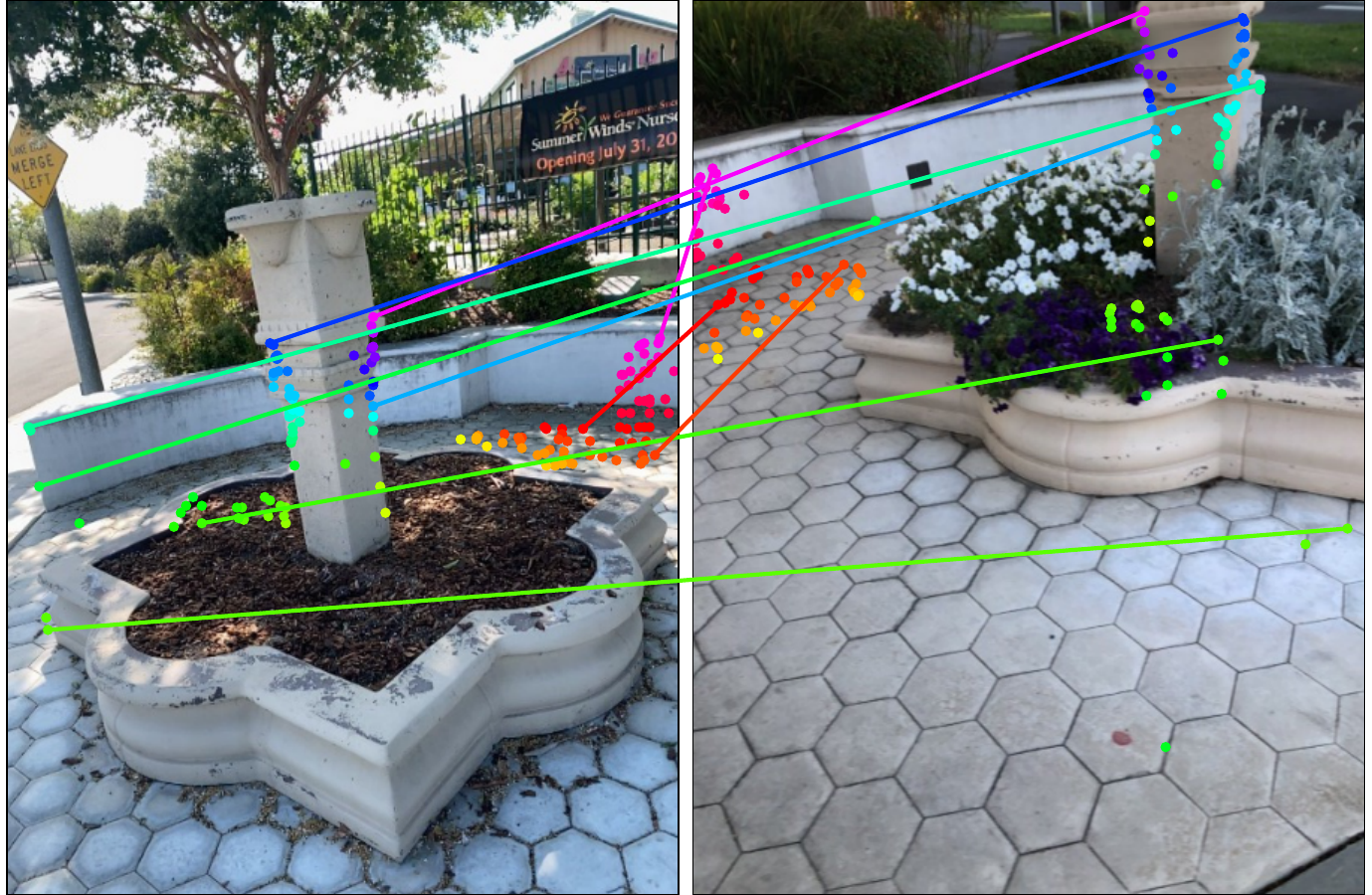}
    \includegraphics[width=0.49\linewidth]{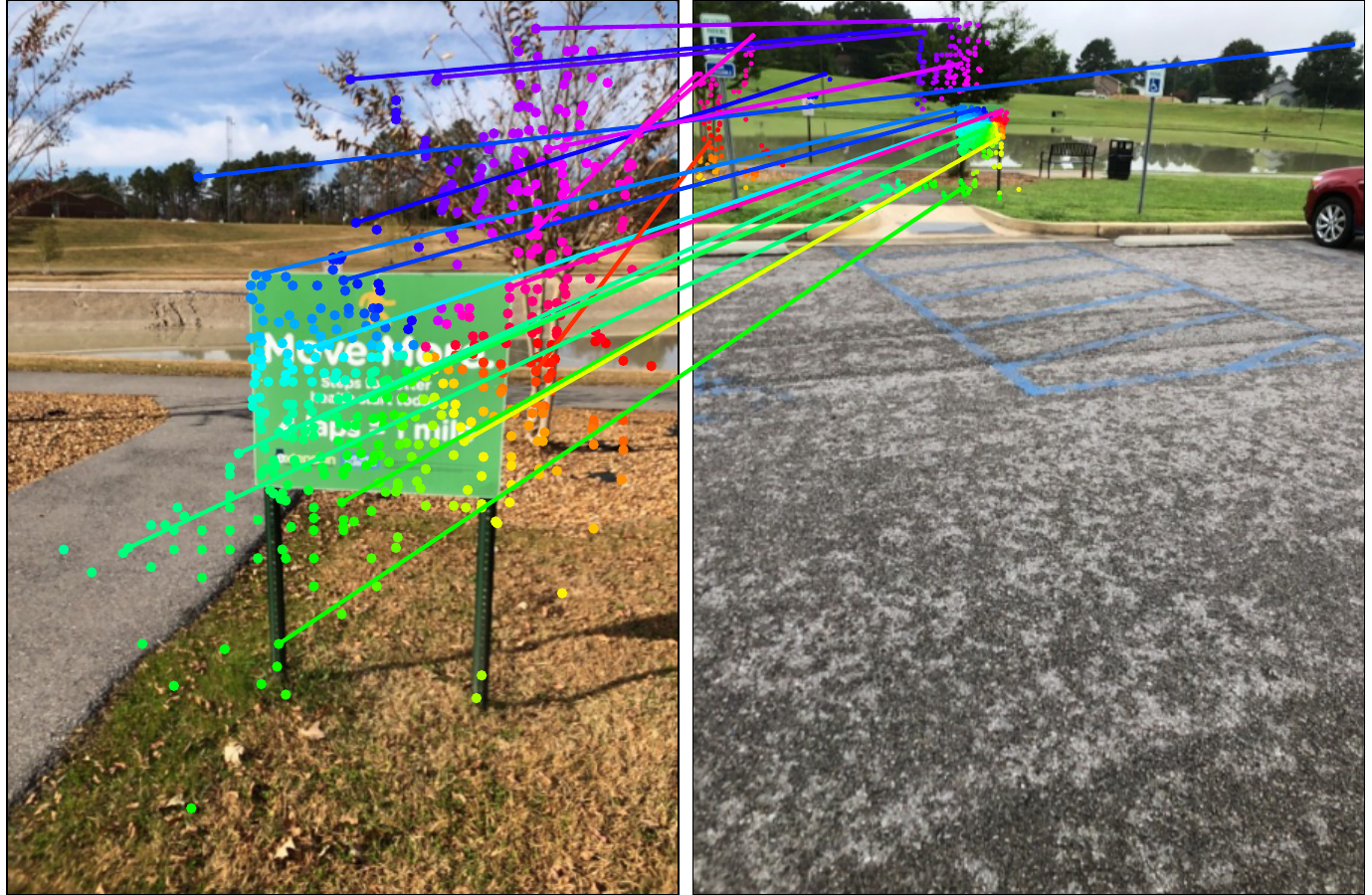}
    \includegraphics[width=0.49\linewidth]{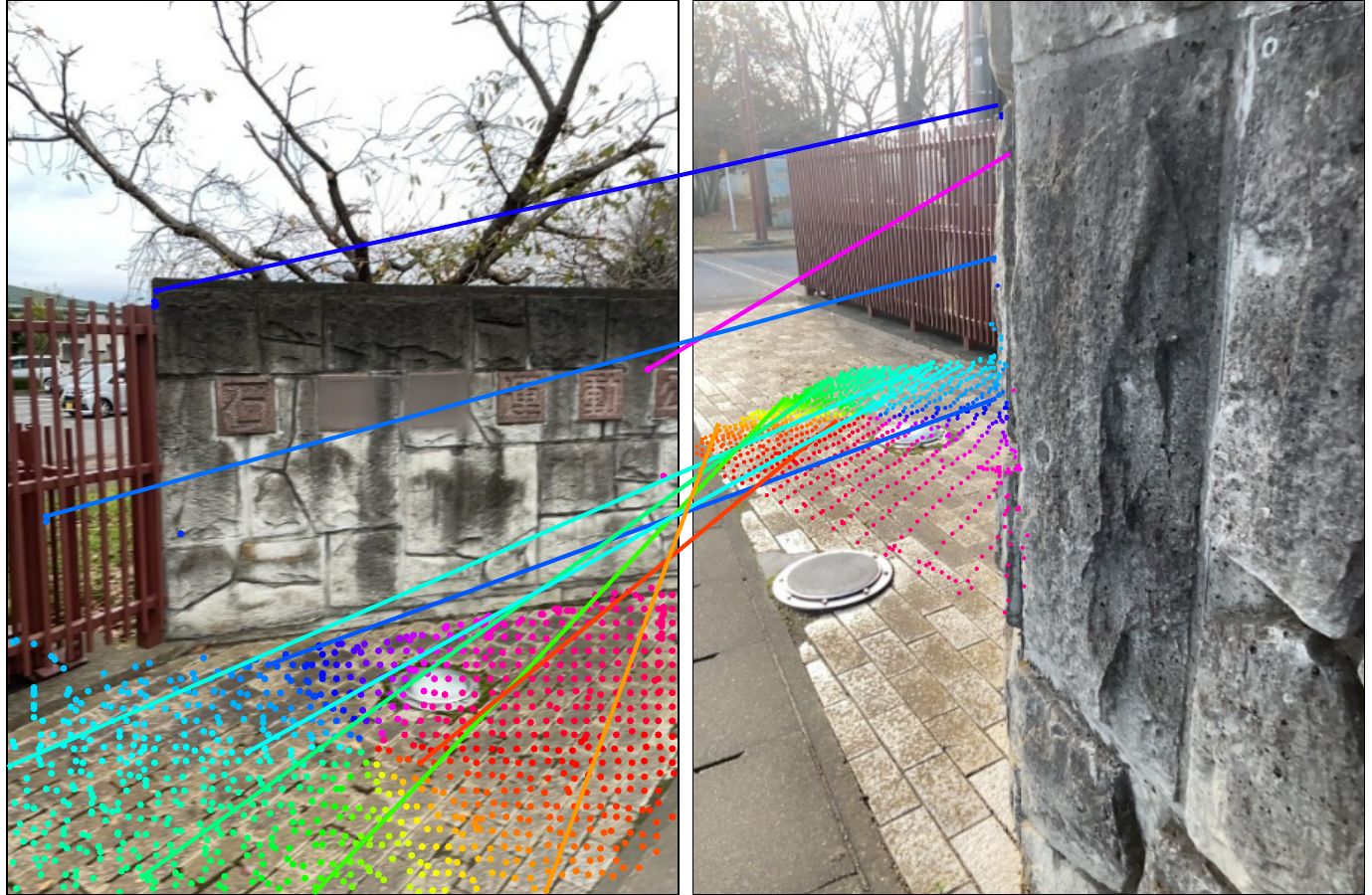}
    \caption{
        \text{Qualitative examples of matching on Map-free localization benchmark.} 
    }
    \label{fig:quali_mapfree}
\end{figure*}

\section{Additional Qualitative Results}
\label{sec:quali}

We provide here additional qualitative results on the DTU~\cite{dtu}, InLoc~\cite{inloc}, Aachen Day-Night datasets~\cite{aachen} and the Map-free benchmark~\cite{mapfree}. 

\mypar{MVS on DTU.}
We show in ~\cref{fig:quali_dtu} the output point clouds after post-processing, shaded with approximate normals from the tangent planes based on the $50$ nearest neighbors. We wish to emphasize again that the point clouds are raw values obtained via triangulation of the \emph{coarse-to-fine} matches of \master{}. The matching was performed in an one-versus-all strategy, meaning that we did not leverage the epipolar constraints coming from the GT cameras, which is in stark contrast with all existing approaches for MVS. \master{} is particularly precise and robust, giving sharp and dense details. The reconstructions are complete even in low-contrast homogeneous regions like the surfaces of the vegetables or the sides of the power supply. The matching is also robust to varied textures or materials, and also to violations of the Lambertian assumption, \ie specularities on the vegetables, plastic surfaces or the white sculpture.

\begin{figure*}[h!]
    \centering
    \begin{tabular}{cc}
     \includegraphics[width=0.45\linewidth]{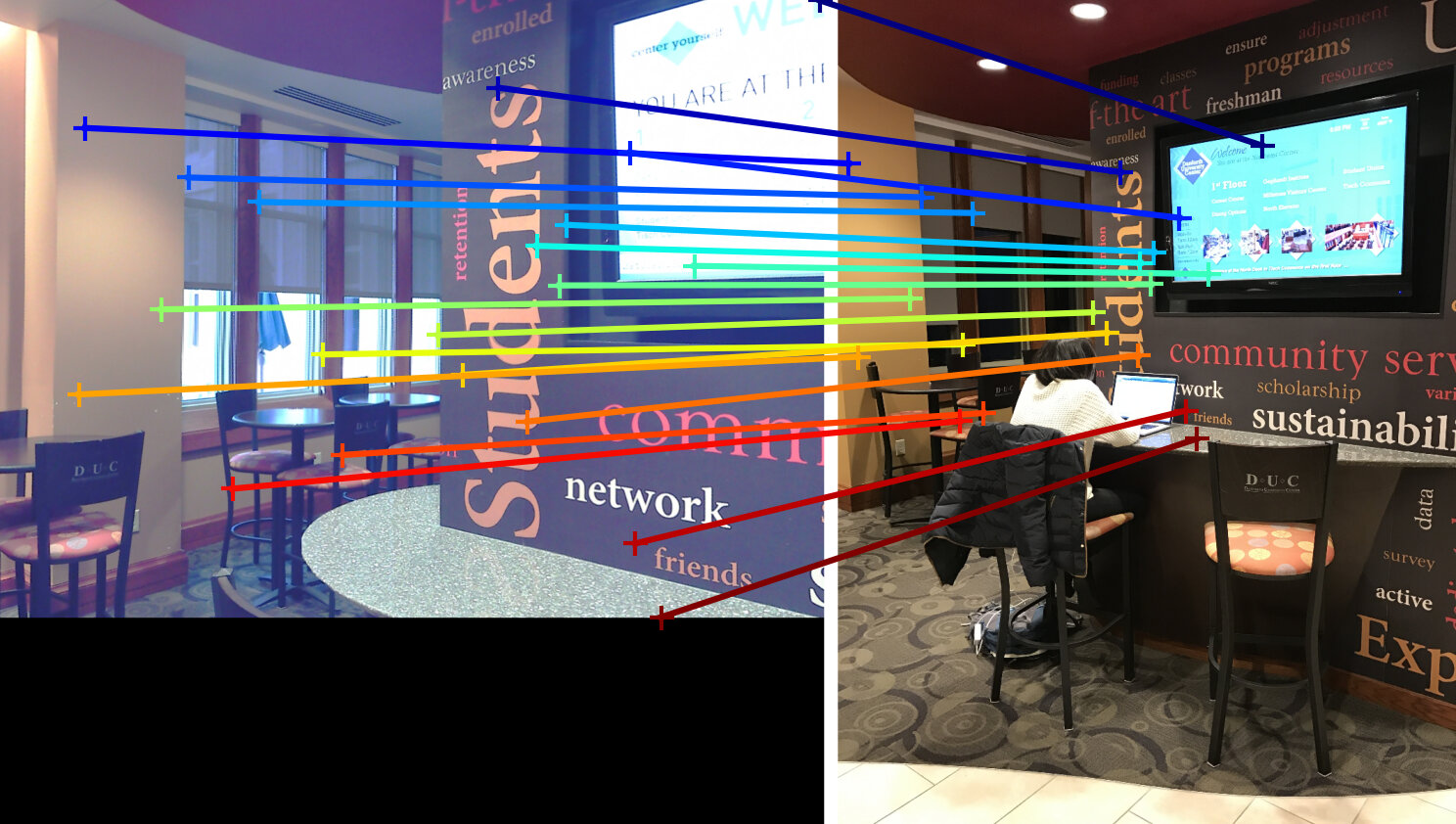} &   \includegraphics[width=0.45\linewidth]{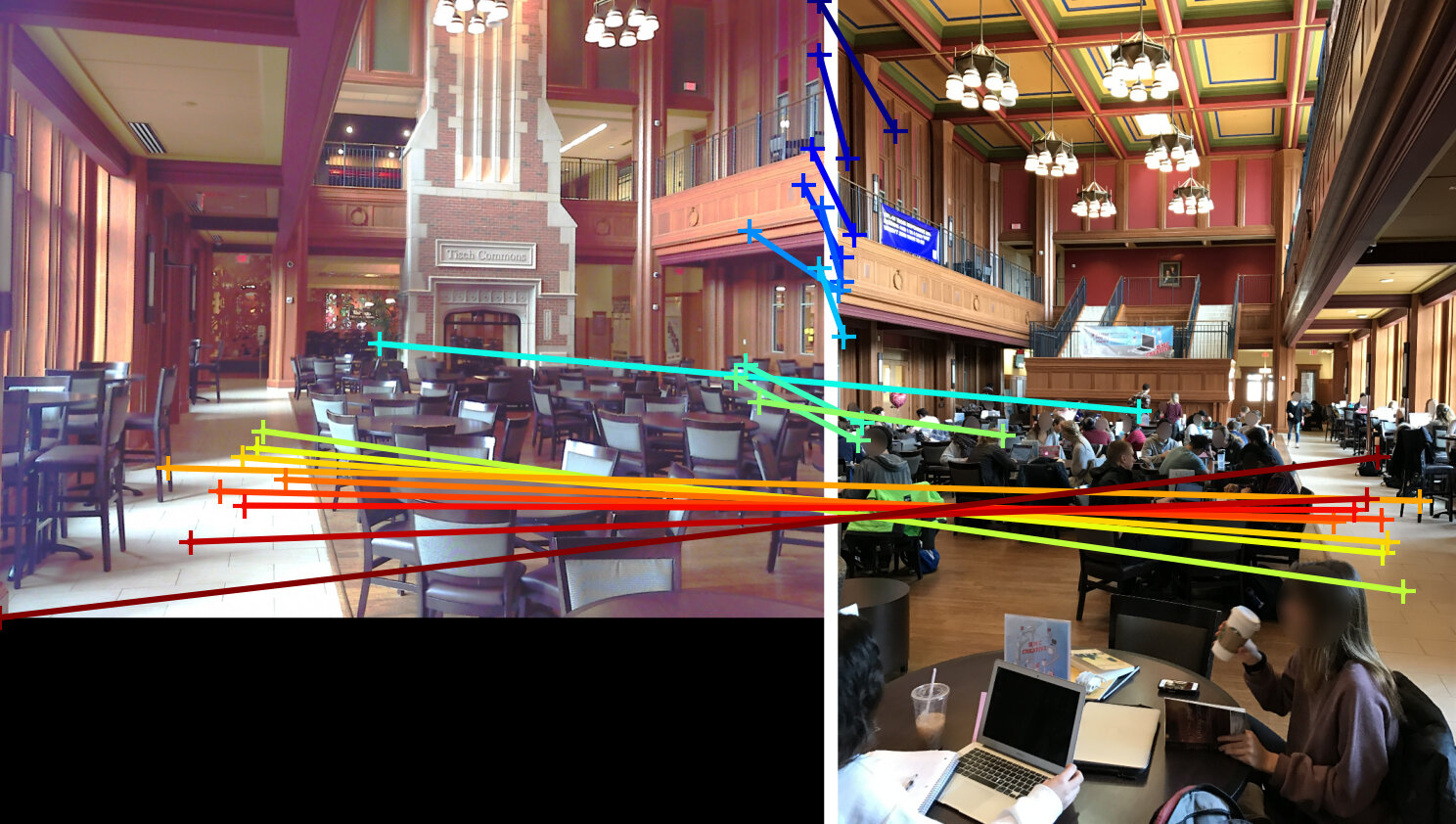} \\
     \includegraphics[width=0.45\linewidth]{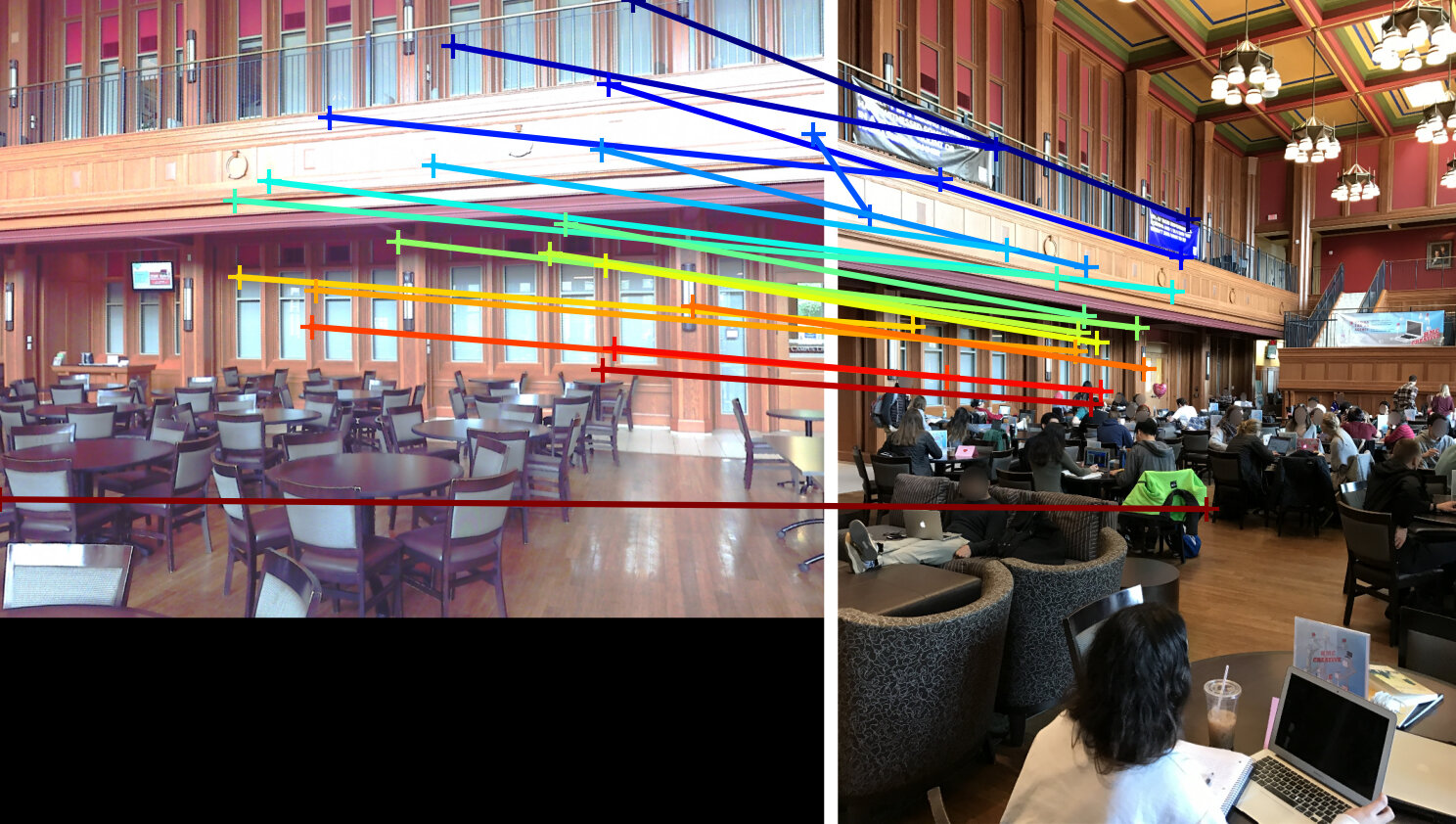} &   \includegraphics[width=0.45\linewidth]{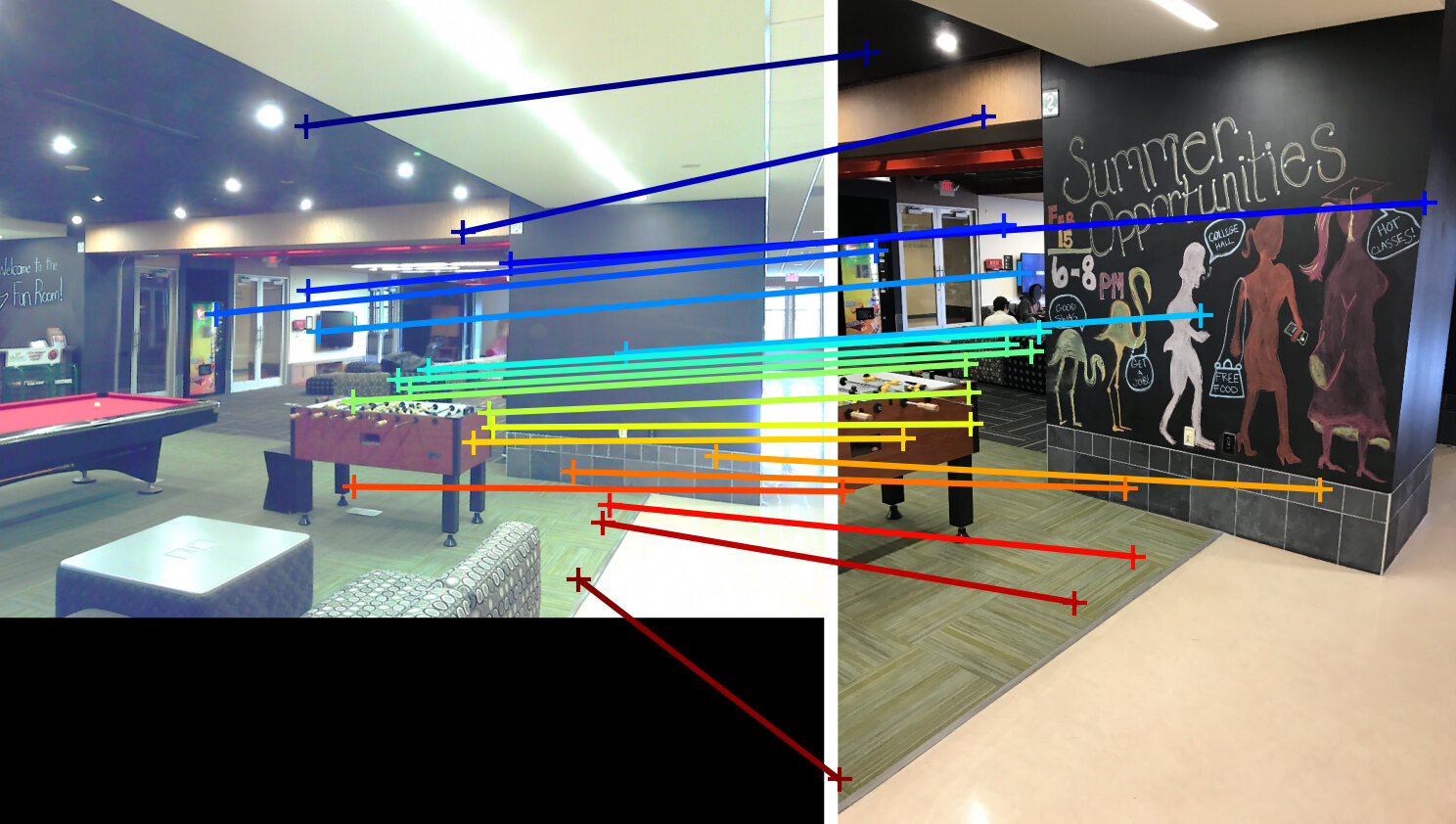} \\
     \includegraphics[width=0.45\linewidth]{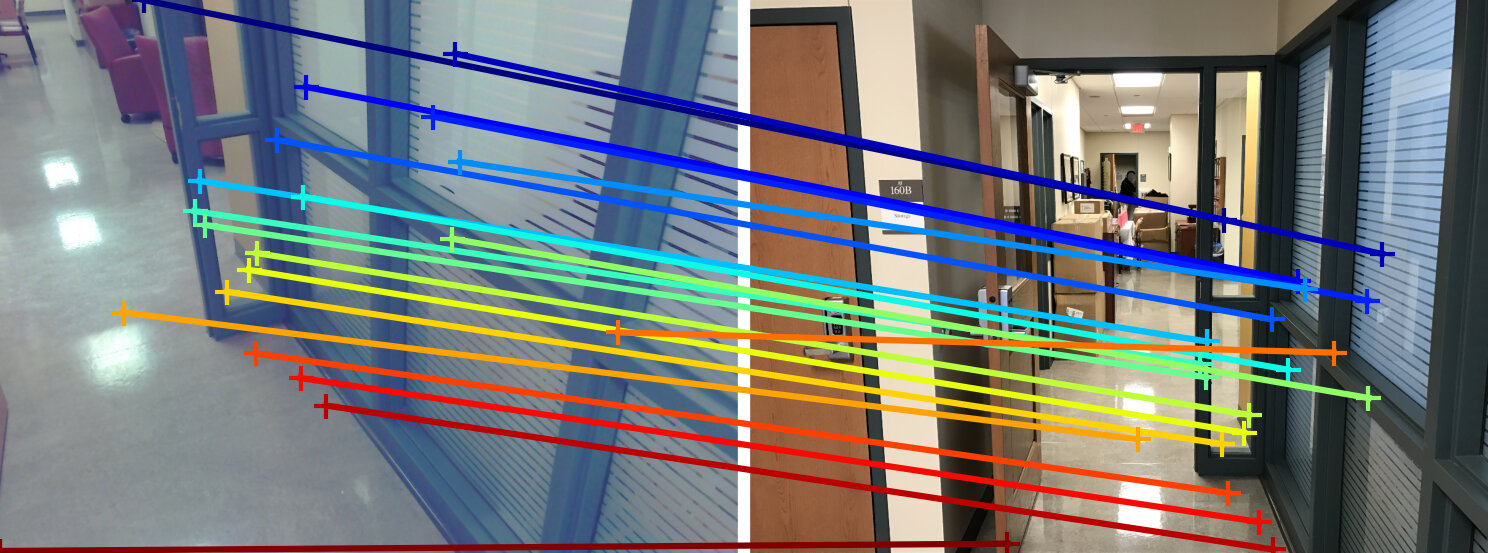} &   \includegraphics[width=0.45\linewidth]{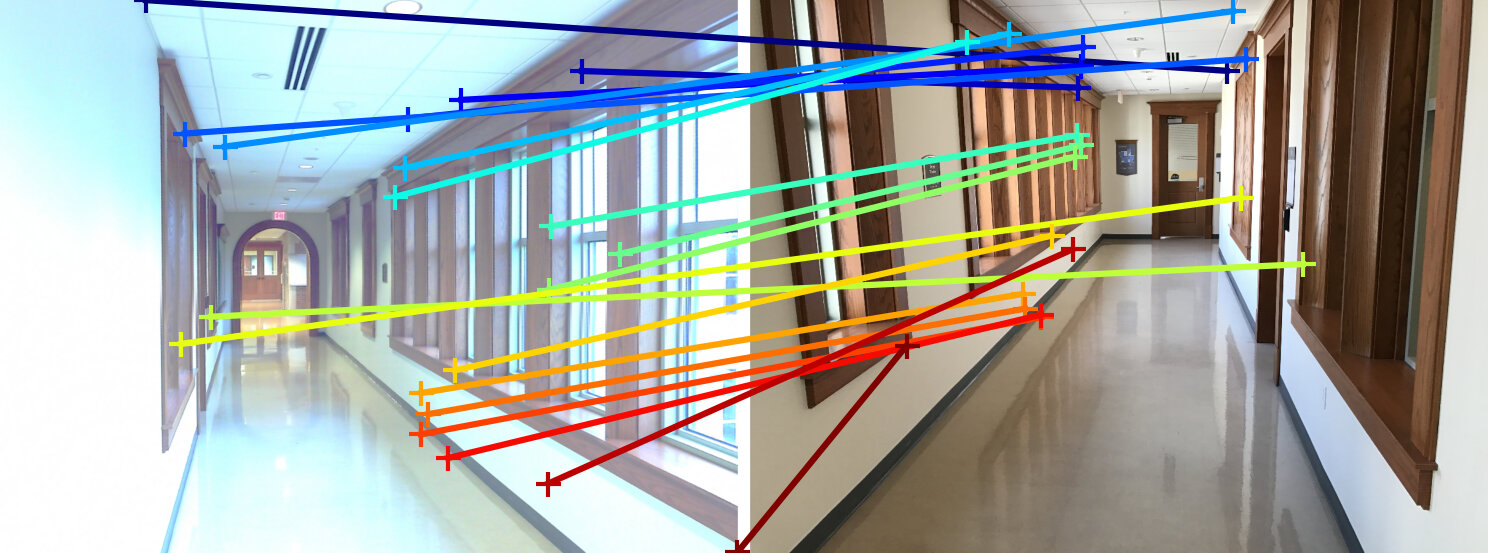} \\
    \end{tabular}
    \caption{
        \text{Qualitative examples of matching on the InLoc localization benchmark.}
    }
    \label{fig:quali_inloc}
    \vspace{-.5cm}
\end{figure*}
\begin{figure*}[h!]
    \centering
    \begin{tabular}{cc}
     \includegraphics[width=0.58\linewidth]{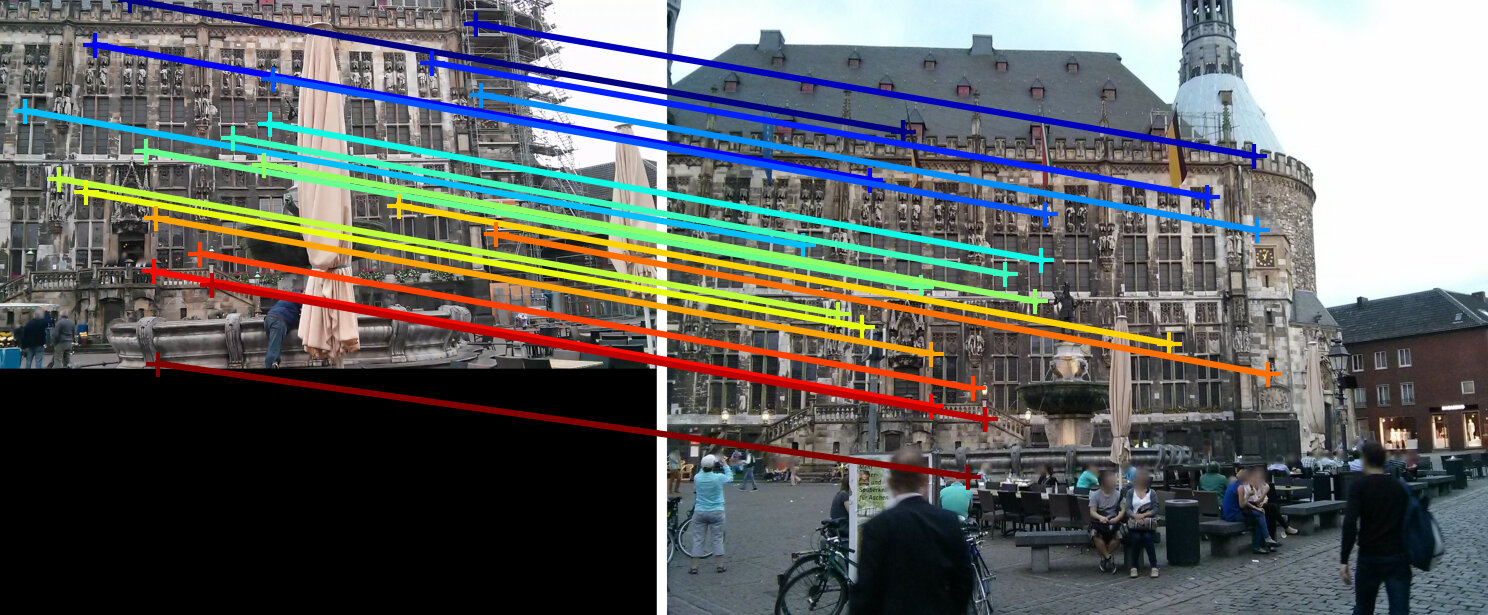} &   \includegraphics[width=0.42\linewidth]{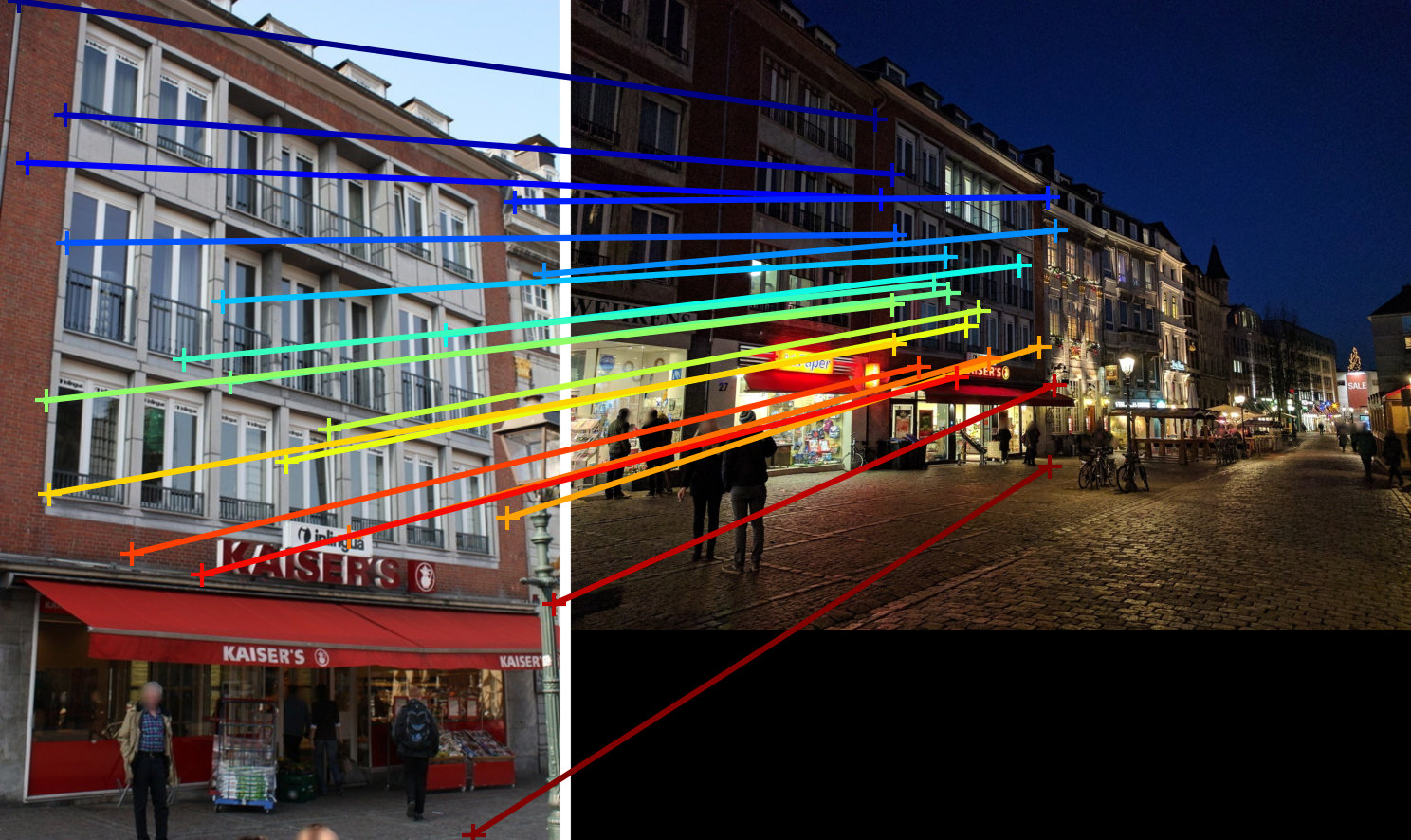} \\
     \includegraphics[width=0.58\linewidth]{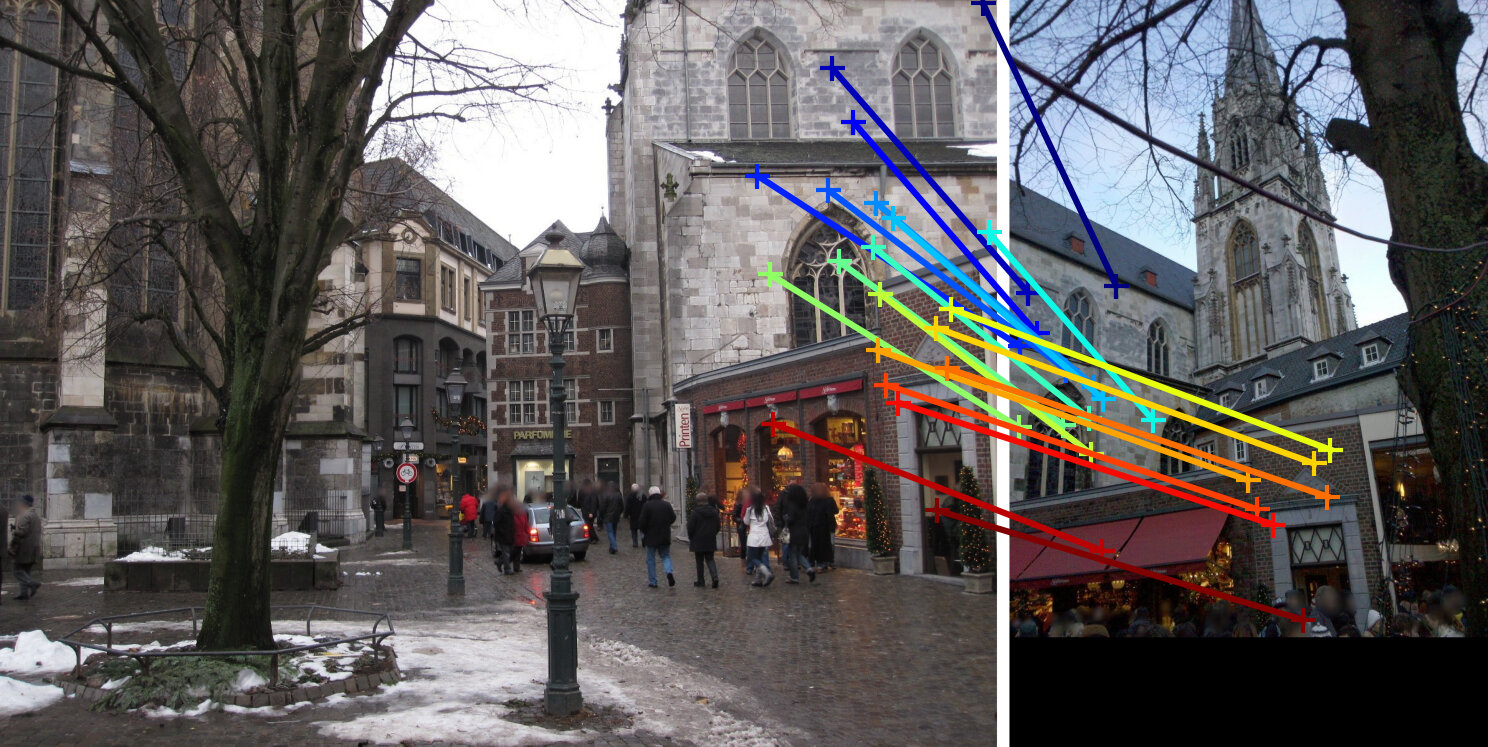} &   \includegraphics[width=0.42\linewidth]{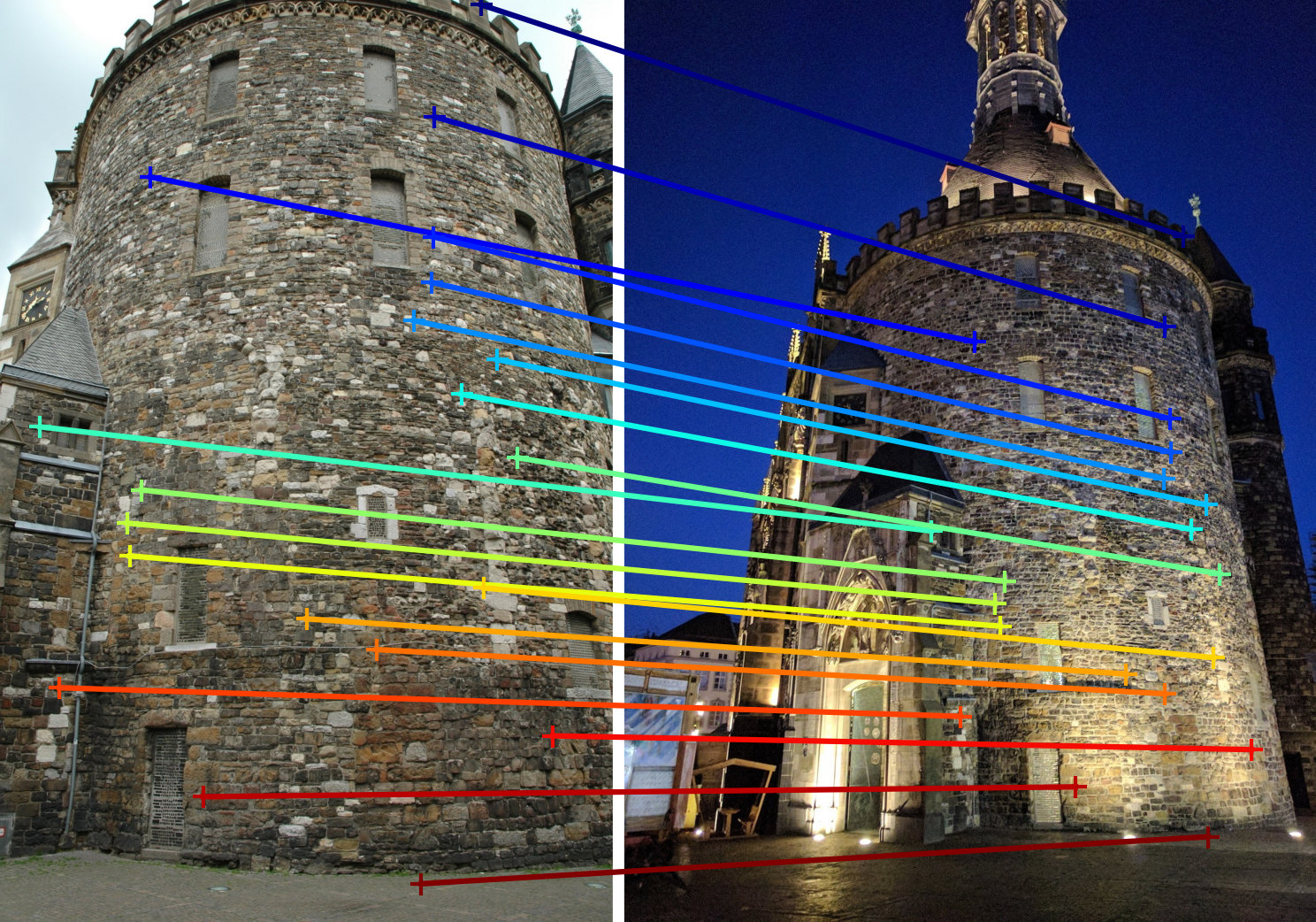} \\
     \includegraphics[width=0.58\linewidth]{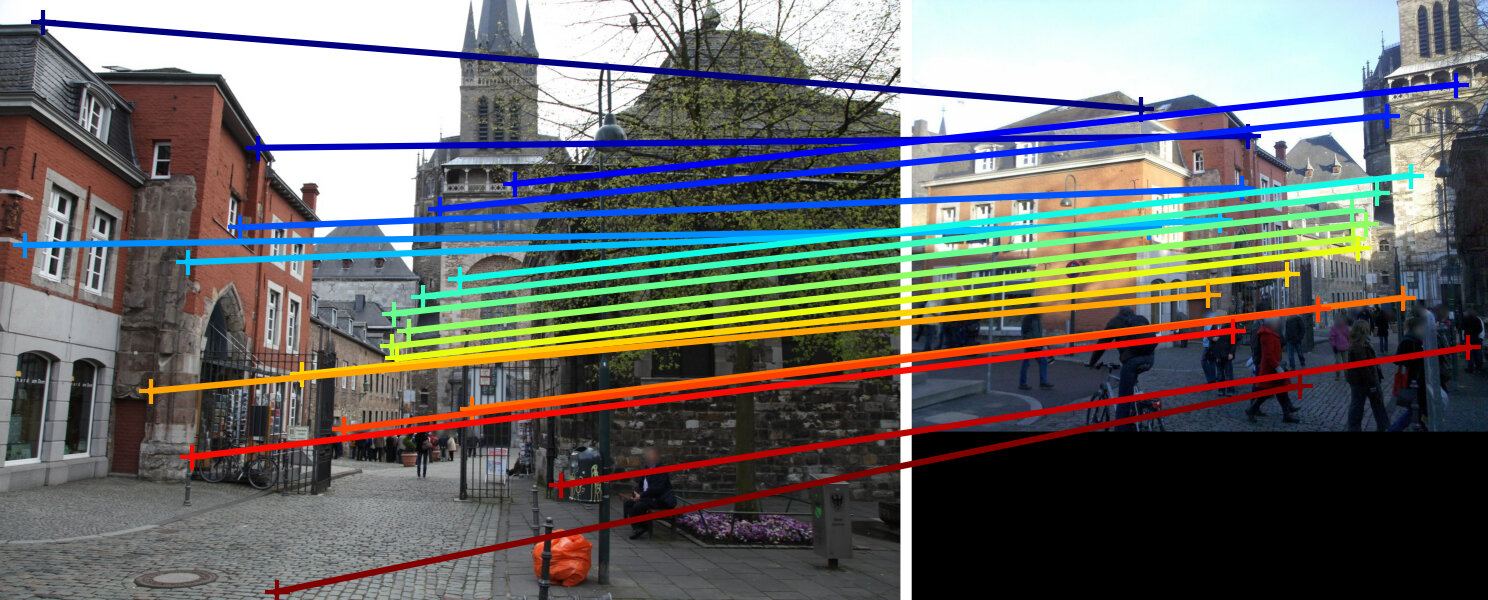} &   \includegraphics[width=0.42\linewidth]{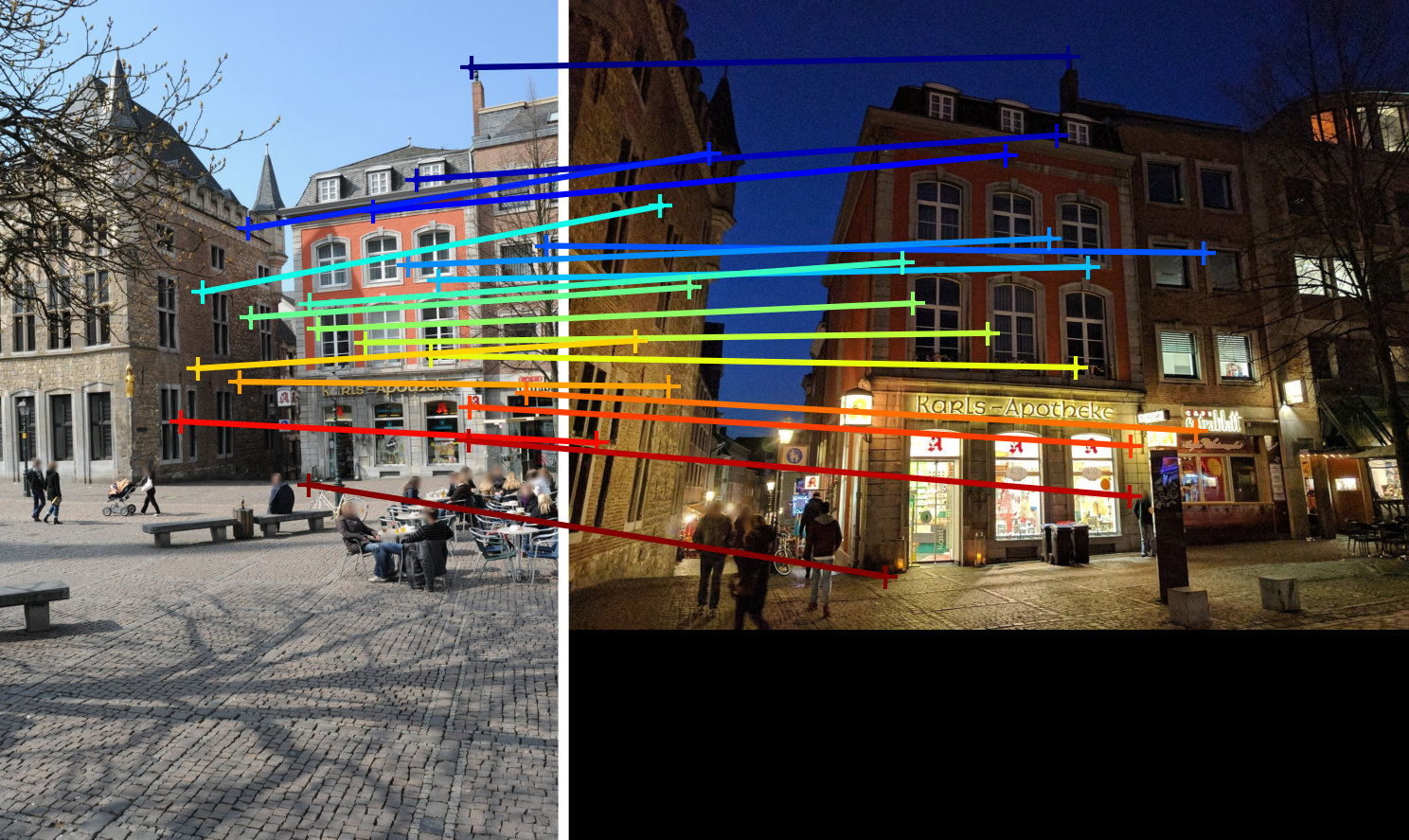} \\
    \end{tabular}
    \caption{
        \text{Qualitative examples of matching on the Aachen Day-Night localization benchmark.} Pairs from the day subset are on the left column, and pairs from the night subset are on the right column.
    }
    \label{fig:quali_aachen}
\end{figure*}

\mypar{Qualitative matching results.}
We show a few examples of matches 
\cref{fig:quali_mapfree} for the Map-free benchmark~\cite{mapfree},
in \cref{fig:quali_inloc} for the InLoc~\cite{inloc} dataset 
and 
in \cref{fig:quali_aachen} for the Aachen Day-Night dataset~\cite{aachen}.
The proposed \master{} approach is robust to extreme viewpoint changes, and still provides approximately correct correspondences in such cases (right-hand side pairs of Map-free in \cref{fig:quali_mapfree}), even for views facing each other (coffee tables or corridor pairs of InLoc~\ref{fig:quali_inloc}). 
This is reminiscent of the capabilities of \duster{} that provided an unprecedented robustness to such cases. 
Similarly, our approach handles large scale differences (\eg on Map-free in~\cref{fig:quali_mapfree}) repetitive and ambiguous patterns, as well as environmental and day/night illuminations changes (\cref{fig:quali_aachen}). 
Interestingly, the accuracy of correspondences output by \master{} gracefully degrades when the viewpoint baseline increases. Even in extreme cases where correspondences get very coarsely estimated, approximately correct relative camera poses can still be recovered.
Thanks to these capabilities, \master{} reach state-of-the-art performance or close to it on several benchmarks in a zero-shot setting. We hope this work will foster research in the direction of pointmap regression for a multitude of vision tasks, where robustness and accuracy are critical.

\begin{figure*}[h!]
    \centering
    \includegraphics[width=.7\linewidth]{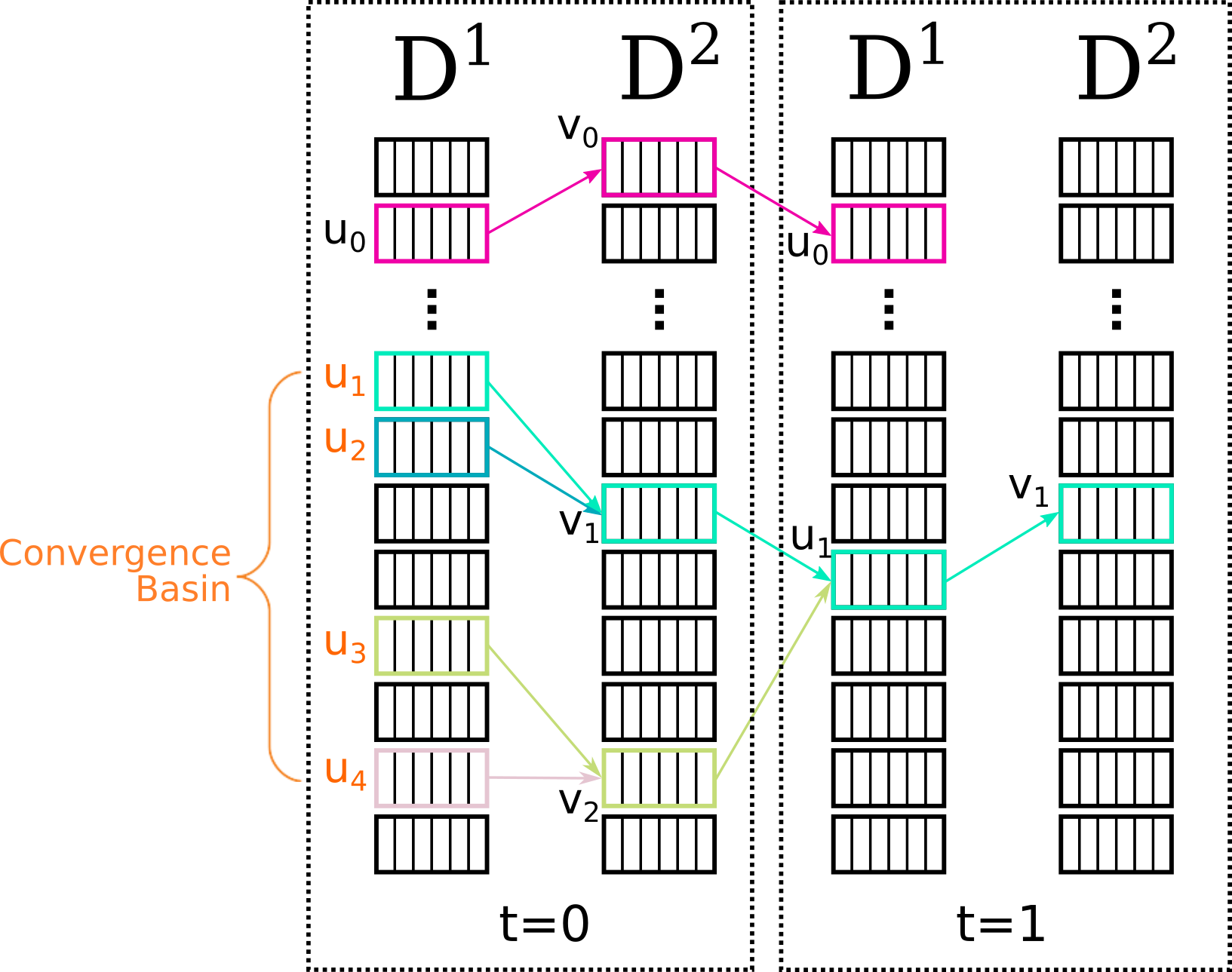}
     \caption{
        \textbf{Illustration of the iterative FRM algorithm.} Starting from 5 pixels in $I^1$ at $t=0$, the FRM connects them to their Nearest Neighbors (NN) in $I^2$, and maps them back to their NN in $I^1$. If they go back to their starting point (top pink), a cycle (reciprocal match) is detected and returned. Otherwise (bottom) the algorithm continues iterating until a cycle is detected for all starting samples, or until the maximal number of iterations is reached. We show in orange the starting points of a \emph{\orange{convergence basin}}, \ie nodes of a sub-graph for which the algorithm will converge towards the same cycle. For clarity, all edges of $\mathcal{G}$ were not drawn.
    }
    \label{fig:graph}
\end{figure*}

\section{Fast Reciprocal Matching}
\label{sec:reciprocal}

\subsection{Theoretical study}

We detail here the theoretical proofs of convergence of the Fast Reciprocal Matching algorithm presented in Sec.3.3 of the main paper. Contrary to the traditional bipartite graph matching formulation~\cite{graphmatch}, where the complete graph is used for the matching, we wish to decrease the computational complexity by calculating only a smaller portion of it. 
As explained in equation (14) of the main paper, considering the two predicted sets of features $D^1$, $D^2 \in \mathbb{R}^{H \times W \times d}$, partial reciprocal matching boils down to finding a subset of the reciprocal correspondences, \ie mutual Nearest Neighbors (NN):

\begin{align}
    \M = \{(i,j) \ | \  j=\N{2}(\D{1}_i) \text{ and } i=\N{1}(\D{2}_j) \}, \label{eq:corres2}\\
    \text{with } \N{A}(\D{B}_j) = \argmin_{i} \left\Vert \D{A}_i - \D{B}_j \right\Vert. \label{eq:nn2}
\end{align}

We remind here the behavior of the algorithm: an initial set of $k$ pixels of $I^1$, $U^0 = \{ U^0_n\}^k_{n=1}$  with $ k \ll WH$, is mapped to their NN in $I^2$, yielding $V^1$, that are then mapped to their nearest neighbors back to $I^1$:
\begin{equation}
    U^t \longmapsto [ \N{2}(\D{1}_u) ]_{u \in U^t} \equiv V^t \longmapsto [ \N{1}(\D{2}_v) ]_{v \in V^t} \equiv U^{t+1}
\end{equation}
After this back-and-forth mapping, the reciprocal matches (\ie those which form a cycle) are recovered and removed from $U^{t+1}$. The remaining "active" ones are mapped back to $I^2$ and reciprocity is checked again. We iterate this process for a few iterations. After enough iterations we discard any active sample remaining.

It is important to note that the NN algorithm we use is deterministic and consistently returns the same index in the case where multiple descriptors in the other image share the same minimal distance (or maximal similarity), although this is very unlikely since descriptors are real-valued.

\mypar{Proof of Convergence.} 
By design, Fast Reciprocal Matching (FRM) operates on the directed bipartite graph $\mathcal{G}$ of nearest neighbors between $I^1$ and $I^2$. $\mathcal{G}$ contains oriented edges $\mathcal{E}$. %
All nodes, \ie pixels, belong to $\mathcal{G}$ since we add an edge for each pixel's nearest neighbor, but note that all pixels cannot reach all other pixels. For example, two reciprocal pixels in $I^1$ and $I^2$ are only connected to each other and to no other pixels. This means $\mathcal{G}$ is composed of possibly multiple disjoint sub-graphs $\mathcal{G}^i, 1 \le i \le HW $ with directed edges $\mathcal{E}^i$ (see~\cref{fig:graph}). 

\begin{proposition}
\label{prop:single_cycle}
There can be only one cycle in each sub-graph $\mathcal{G}^i$.
\end{proposition}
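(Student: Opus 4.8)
The plan is to exploit the key structural fact about $\mathcal{G}$: every node has out-degree exactly $1$, since each pixel has a unique nearest neighbor (the NN function is deterministic, as noted just before the proposition). I would argue by contradiction: suppose some sub-graph $\mathcal{G}^i$ contains two distinct cycles $\mathcal{C}_1$ and $\mathcal{C}_2$. Since $\mathcal{G}^i$ is (weakly) connected by definition of a sub-graph, there must be an (undirected) path linking a node of $\mathcal{C}_1$ to a node of $\mathcal{C}_2$. I would then trace this path and use the out-degree-$1$ property to derive a contradiction.

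The main steps, in order, are: (i) observe that in a directed graph where every vertex has out-degree exactly $1$, a vertex lying on a cycle must have its unique outgoing edge stay on that cycle, hence it cannot also send an edge toward a different cycle; (ii) more carefully, since $\mathcal{G}$ is bipartite between $I^1$ and $I^2$ and edges alternate sides, note that a cycle has even length and that each node on it has its single successor determined; (iii) conclude that the set of nodes reachable (forward) from any node eventually funnels into a unique cycle — because following the out-edges from any start gives a deterministic walk in a finite graph, which must eventually repeat a vertex, and the first repeated vertex closes off exactly one cycle; (iv) since $\mathcal{G}^i$ is connected and every node's forward orbit lands in a cycle, if there were two cycles, pick a node $w$ on an undirected path between them; following out-edges from $w$ reaches only one cycle, but symmetry of the two cycles' roles forces both to be reachable, contradiction. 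The cleanest phrasing: two distinct cycles in the same weakly-connected component would each be the terminal cycle of some forward orbit, but distinct cycles are vertex-disjoint (two cycles sharing a vertex would, by out-degree $1$, be forced to coincide), and a weakly connected functional graph component contains exactly one cycle.

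The one subtlety I would be careful about is making precise why two cycles sharing a vertex must be identical, and why a weakly connected component of a functional graph has exactly one cycle: if $v$ lies on a cycle, its unique out-edge is the next cycle edge, so the entire forward orbit of $v$ is that cycle; thus any two cycles are either equal or disjoint, and in a connected component the "rho-shaped" structure (trees hanging off a single cycle) is forced. I would state this as a short lemma or fold it directly into the argument.

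I expect the main obstacle to be merely expository rather than mathematical: the result is essentially the elementary observation that a \emph{functional graph} (out-degree $1$ everywhere) has exactly one cycle per weakly-connected component, and the work is in phrasing this cleanly for the bipartite NN graph $\mathcal{G}^i$ without over-formalizing. No heavy machinery is needed; the risk is only in being sloppy about "reachable" (forward-reachable vs. weakly connected) when invoking connectedness of $\mathcal{G}^i$.
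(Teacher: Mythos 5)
Your proposal is correct and rests on exactly the same observation as the paper's proof: every node has out-degree one (the NN map is deterministic), so a node on a cycle cannot send an edge outside it, and hence each weakly connected component of this functional graph contains at most one cycle. You are in fact slightly more careful than the paper in distinguishing forward-reachability from weak connectivity and in noting that distinct cycles must be vertex-disjoint, but the route is essentially identical.
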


\begin{proof}
This is a rather trivial fact, since we build $\mathcal{G}$ s.t. only one edge exits each node. If one were to follow the path of a sub-graph $\mathcal{G}^i$, once a node that belongs to a cycle is reached, no edge can exit the cycle, for the only exiting edge is already part of the cycle. A second cycle (or more) thus cannot exist in $\mathcal{G}^i$.     
\end{proof}

\begin{lemma}
Each of the subgraph $\mathcal{G}^i$ is either a single cycle or a special arborescence, \ie a directed graph where, from any node there exist a single path towards a root cycle. %
\end{lemma}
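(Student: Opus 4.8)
The plan is to characterize each connected sub-graph $\mathcal{G}^i$ purely from the structural constraint established in \cref{prop:single_cycle}: every node has out-degree exactly one, and $\mathcal{G}^i$ contains at most one cycle. First I would observe that $\mathcal{G}^i$ must contain \emph{at least} one cycle: starting from any node and repeatedly following the unique outgoing edge, one generates an infinite walk inside the finite node set of $\mathcal{G}^i$, so some node must repeat; the first repeated node closes a cycle. Combined with \cref{prop:single_cycle}, this gives exactly one cycle $\mathcal{C} \subseteq \mathcal{G}^i$. If $\mathcal{C}$ already exhausts all nodes of $\mathcal{G}^i$, we are in the first case (a single cycle); otherwise I proceed to show the second case.

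Next I would show that from \emph{every} node $p \in \mathcal{G}^i$ there is a directed path reaching $\mathcal{C}$. Again following the unique outgoing edges from $p$, we obtain a walk that, by the same finiteness/pigeonhole argument, must eventually enter a cycle; since $\mathcal{C}$ is the \emph{only} cycle in $\mathcal{G}^i$, this walk enters $\mathcal{C}$. Hence every node has a path to the root cycle. Uniqueness of this path is immediate from out-degree one: at each node the continuation is forced, so the path from $p$ to the moment it first meets $\mathcal{C}$ is the unique such path. This is precisely the arborescence-with-root-cycle structure claimed. I would also note the deterministic NN tie-breaking remark from the text is what guarantees the out-degree-one property is well-defined in the first place, so it is worth citing.

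To make the ``special arborescence'' terminology rigorous, I would phrase it as: contracting the root cycle $\mathcal{C}$ to a single node turns $\mathcal{G}^i$ into an in-tree (anti-arborescence) oriented towards that contracted node — equivalently, $\mathcal{G}^i$ minus the edges internal to $\mathcal{C}$ is a forest of trees each hanging off a distinct node of $\mathcal{C}$, with all edges pointing cycle-ward. The key counting check is edges: $\mathcal{G}^i$ has exactly $|\mathcal{G}^i|$ edges (one per node), the cycle uses $|\mathcal{C}|$ of them on $|\mathcal{C}|$ nodes, leaving $|\mathcal{G}^i| - |\mathcal{C}|$ edges for the remaining $|\mathcal{G}^i| - |\mathcal{C}|$ non-cycle nodes, each contributing exactly one outgoing edge — consistent with a forest structure rooted on $\mathcal{C}$.

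The main obstacle I anticipate is not the argument itself, which is an elementary functional-graph fact, but stating it cleanly: the phrase ``single path towards a root cycle'' needs the convention that the path \emph{terminates} upon first hitting the cycle (otherwise the path is infinite, since one can keep looping around $\mathcal{C}$), and one must be careful that a non-cycle node's path could pass through several non-cycle nodes before reaching $\mathcal{C}$ rather than hitting it immediately. I would handle this by explicitly defining the path as the maximal cycle-free prefix of the forced walk from $p$, and then invoking \cref{prop:single_cycle} to conclude it is unique and lands on $\mathcal{C}$. The finiteness/pigeonhole step and the out-degree-one bookkeeping are routine and I would not belabor them.
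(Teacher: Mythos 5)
Your proof is correct, but it takes a genuinely different route from the paper's. You establish existence of the root cycle by the standard functional-graph argument: out-degree one plus finiteness forces the forced walk from any node to repeat a vertex, and the first repetition closes a cycle, which \cref{prop:single_cycle} then shows is unique; everything else (unique path to the cycle, the edge count, the in-tree structure after contracting the cycle) follows by bookkeeping. The paper instead walks the graph and tracks the NN similarity $s(\cdot,\cdot)$ along consecutive edges, showing it is monotonically non-decreasing, so the walk must terminate at the maximal-similarity edge of the subgraph, whose two endpoints are then necessarily mutual nearest neighbors. Your argument is more elementary and more general -- it works for any out-degree-one digraph and never touches the metric structure -- and it is fully sufficient for the lemma as stated. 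What the paper's monotonicity argument buys in exchange is strictly more information: it locates the root cycle at the pair of maximal mutual similarity and shows that, absent exact ties among real-valued similarities, the cycle has length exactly two, i.e.\ it is a genuine reciprocal match. That extra conclusion is what the subsequent corollary (FRM always converges to \emph{reciprocal} matches) actually relies on; your pigeonhole argument alone leaves open the possibility of a longer root cycle, on which the algorithm's cycle test $U^t_n = U^{t+1}_n$ would never fire. So if you were to replace the paper's proof with yours, you would still need to append the similarity-monotonicity (or an equivalent tie-breaking) argument before proving the corollary.
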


\begin{figure*}[t]
    \centering
    \includegraphics[width=.8\linewidth]{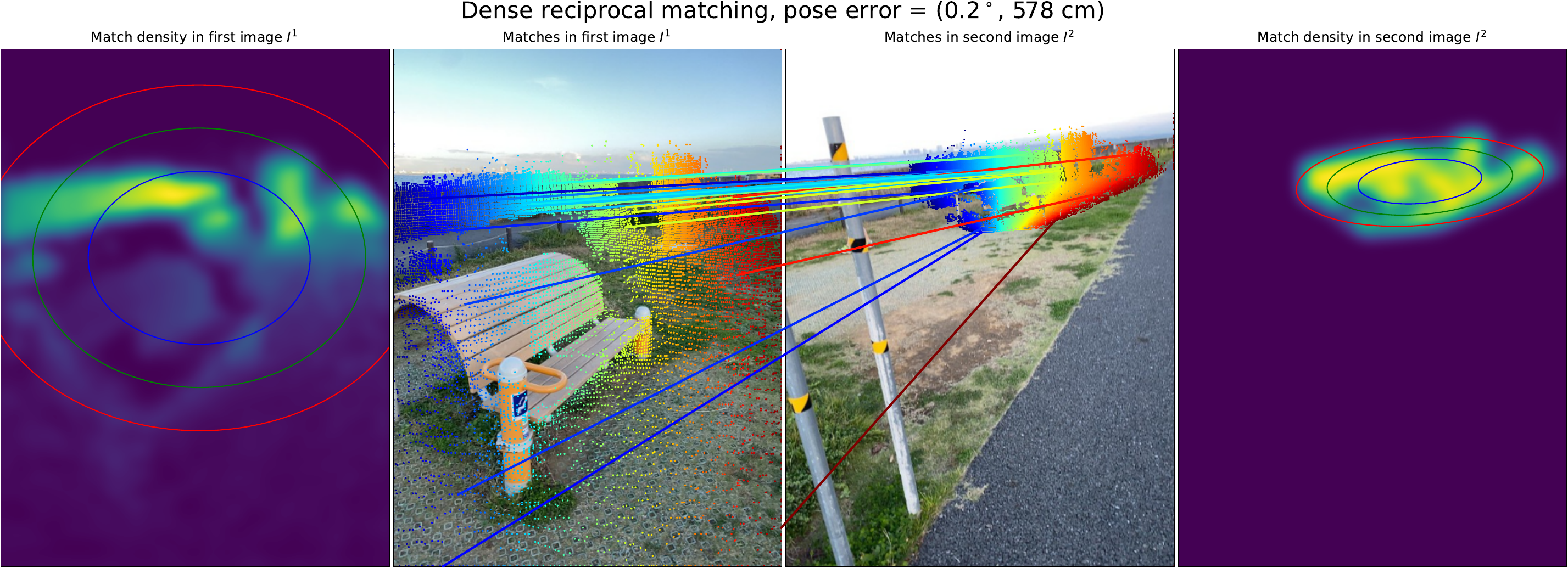}
    \\[2mm]
    \includegraphics[width=.8\linewidth]{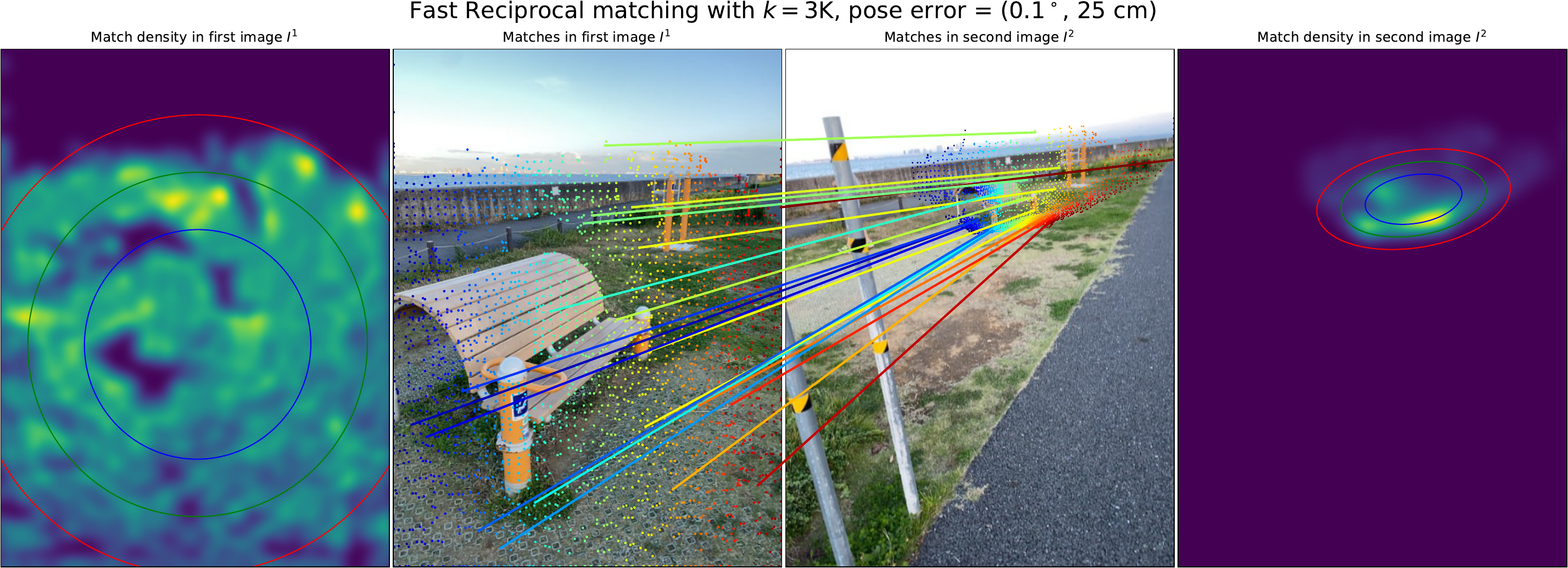}
    \\[-2mm]
    \caption{
        Illustration of the difference in matching density when using dense reciprocal matching (baseline) and fast reciprocal matching with $k=3000$. 
        Fast reciprocal matching samples correspondences with a bias for large \orange{convergence basins}, resulting in a more uniform coverage of the images. 
        Coverage can be measured in terms of the mean and standard deviation $\sigma$ of the point matches in each density map, plotted as colored ellipses (red, green and blue correspond respectively to $1\sigma, 1.5\sigma$ and $2\sigma$).}
    \label{fig:matching_density}
\end{figure*}

\begin{figure*}[h]
    \centering
    \includegraphics[width=0.7\linewidth]{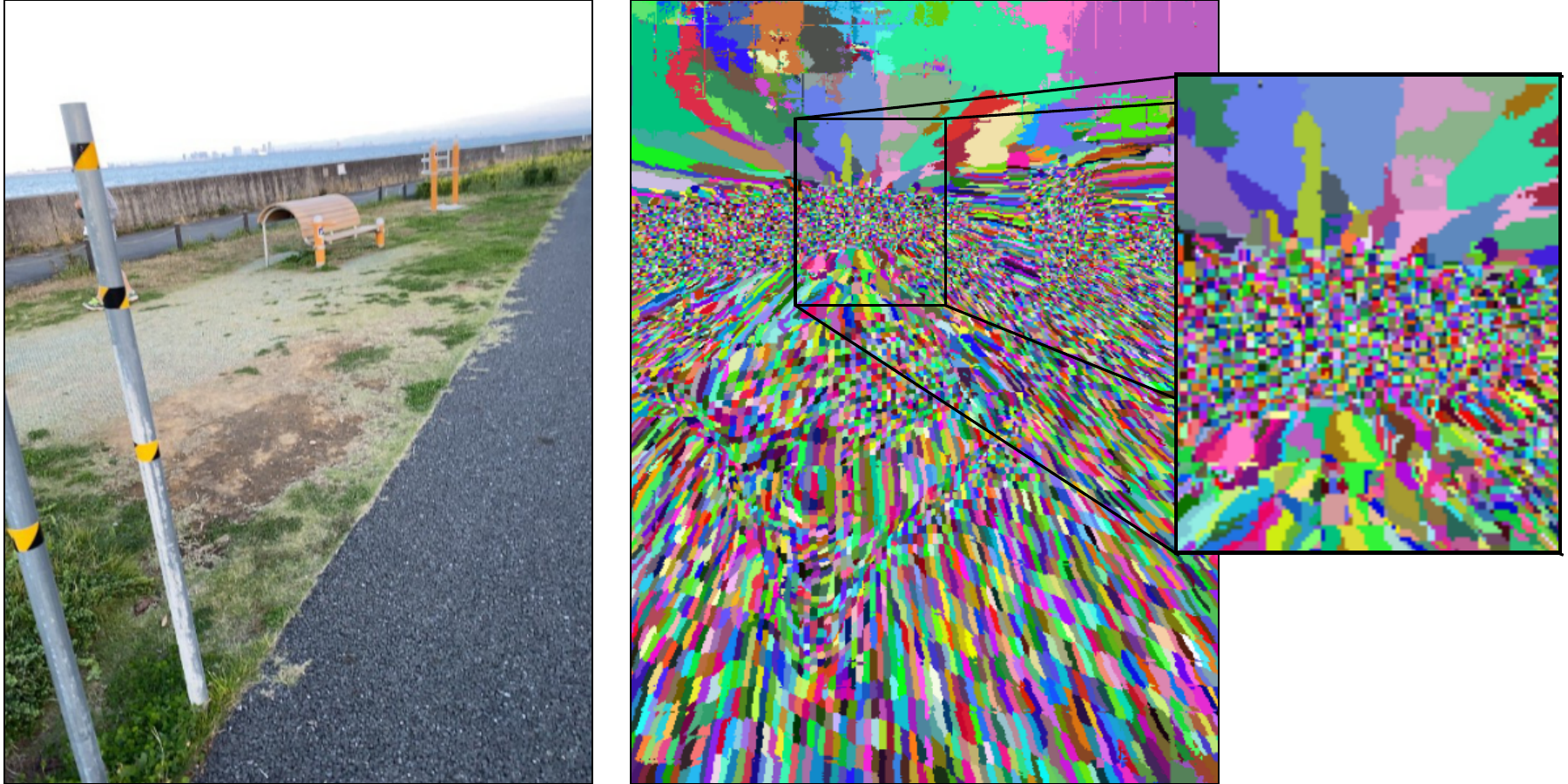}
    \caption{Illustration of \orange{convergence basins} for one of the image in \cref{fig:matching_density}.
        Each basin is filled with the same (random) color.
        A convergence basin is an area for which any of its point will converge to the same correspondence when applying the fast reciprocal matching algorithm. }
    \label{fig:basins}
\end{figure*}

\begin{proof}
The former follows naturally from the previous explanation: since there can only be a single cycle in $\mathcal{G}^i$, it can naturally be a cycle. We now demonstrate the latter, \ie when $\mathcal{G}^i$ is not trivially a cycle. Let us march on $\mathcal{G}^i$ starting from an arbitrary node $a$, to which is attached a descriptor $D^1_a$. 
The only edge exiting this node goes to its nearest neighbor $NN_2(D^1_a) = b$. Now at node $b$, we do the same and follow the only edge exiting back to $I^1$: $NN_1(D^2_b) = c$. Alternating between $I^1$ and $I^2$, we get $NN_2(D^1_c)=d$, $NN_1(D^2_d)=e$ and so forth. We denote $s(u,v) = D_u^{1\top} D^2_v$ the similarity score of an edge between two nodes $u$ and $v$, $(u,v) \in \mathcal{E}^i$. Because edges are nearest neighbors, we note that $s(a,b) \le s(c,b)$. This trivially stems from the fact that if $s(c,b) < s(a,b)$ then the nearest neighbor of $b$ would no longer be $c$ but at least $a$. Expanding this property to the path along $\mathcal{G}^i$ it follows that:

\begin{equation}
\label{eq:simincreas}
   s(a,b) \le s(c,b) \le s(c,d) \le s(e,d) ...  
\end{equation}

Meaning that the similarity score monotonously increases as we walk along the graph. There is a finite number of nodes in $\mathcal{G}^i$ so this sequence reaches the upper-bound similarity value $s(u,v)$. Because $s(u,v)$ is the maximal similarity in $\mathcal{G}^i$, this ensures that $NN_2(D^1_u) = v$ and $NN_1(D^2_v) = u$ forming a cycle of at least two nodes. This means there is always a cycle in $\mathcal{G}^i$, between the maximal similarity pair. Following~\cref{prop:single_cycle}, we can conclude that there is no other cycle in $\mathcal{G}^i$ and that each starting point is thus guaranteed to lead towards the root via a single path, forming an arborescence with a cycle at its root. 
\end{proof}
Note that the root cycle can be of more than two nodes if more than one greatest similarity of~\cref{eq:simincreas} are perfectly equal and the NN algorithm creates a greater cycle. Because $\mathcal{G}$ is a bipartite graph, $\mathcal{G}^i$ is also bipartite, meaning the end-cycle is composed of an even number of nodes.
In practice however, we work with floating-point descriptors of dimension $24$. For greater cycles to exist, \eg cycles of $4$ nodes $a$, $b$, $c$, $d$, the similarities must satisfy increasingly prohibitive constraints, \eg $s(a,b) = s(c,b) = s(c,d) = s(a,d)$. 
This is extremely unlikely with real-valued distance and we consider it is negligible. 

\begin{corollary}
Regardless of the starting point in $\mathcal{G}^i$, the FRM algorithm always converges towards reciprocal matches.
\end{corollary}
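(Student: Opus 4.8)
The plan is to leverage the structural result established in the preceding lemma: each connected sub-graph $\mathcal{G}^i$ is either a single cycle or an arborescence rooted at a unique cycle. Given this, the corollary follows almost immediately, so the "proof" is really a short deduction rather than new work.

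First I would recall what the FRM algorithm does on a single sub-graph $\mathcal{G}^i$: starting from any node $a \in \mathcal{G}^i$, the back-and-forth nearest-neighbor mapping $U^t \mapsto V^t \mapsto U^{t+1}$ is exactly the operation of following the unique outgoing edge of each node twice. Because every node has exactly one outgoing edge (by construction of $\mathcal{G}$), the trajectory of the starting point is deterministic and uniquely determined. By the lemma, this trajectory either begins on the root cycle (in which case it stays on it) or lies on the arborescence and, following the unique path towards the root, reaches the cycle after finitely many steps — here I would invoke the monotone-increasing similarity argument from \cref{eq:simincreas}, which guarantees the walk terminates at the maximal-similarity pair and cannot loop prematurely. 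Since $|\mathcal{G}^i| \le HW$ is finite, the number of steps before hitting the cycle is bounded by the depth of the arborescence.

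Next I would connect this to the reciprocal-match detection and the stopping criterion in the algorithm. Once the trajectory enters the root cycle (which, as noted after the lemma, is in practice a $2$-cycle $(u,v)$ with $v = \N{2}(\D{1}_u)$ and $u = \N{1}(\D{2}_v)$), the pair $(U^t_n, V^t_n)$ satisfies $U^t_n = U^{t+1}_n$, so it is collected as a reciprocal match and removed from the active set. Thus every starting sample in $U^0$ that lies in some $\mathcal{G}^i$ reaches the root cycle of that sub-graph in finitely many iterations and is output as a correspondence. It only remains to observe that the iteration count needed across all $k$ starting points is the maximum depth over the relevant arborescences, which is finite; hence for a sufficiently large (finite) iteration budget every sample converges, and the set $\M_k = \bigcup_t \M_k^t \subseteq \M$ of returned pairs consists entirely of genuine reciprocal matches.

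There is no serious obstacle here since all the heavy lifting is done by \cref{prop:single_cycle} and the lemma; the only thing to be careful about is the edge case of cycles longer than two nodes, where a returned "cycle" pair $(U^t_n, V^t_n)$ with $U^t_n = U^{t+1}_n$ still corresponds to a genuine mutual nearest-neighbor pair along the cycle, so the conclusion "converges towards reciprocal matches" is unaffected; as already argued, such longer cycles are measure-zero for real-valued descriptors and can be disregarded. I would close by noting that finiteness of $HW$ is what turns "eventually converges" into "converges within a bounded number of iterations," matching the empirical observation in \cref{fig:recip} (center) that $|U^t|$ drops to zero after about five steps.
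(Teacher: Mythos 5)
Your proposal is correct and follows essentially the same route as the paper: both arguments simply invoke the preceding lemma (each sub-graph is an arborescence rooted at a unique cycle) and observe that the deterministic walk from any starting node must follow the unique path to that root cycle, which is by definition a reciprocal match. Your added remarks on finiteness of the iteration count and the longer-cycle edge case are consistent with the paper's own brief discussion and do not change the argument.
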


This follows naturally from the above: we did not make any assumption about the starting point of this walk nor about the sub-graph it belongs to. For any starting point in the graph, \ie for all initial pixels $U$, the FRM algorithm will by design follow the sub-graph of nearest neighbors that will ultimately lead to the root cycle, which is by definition a reciprocal match.

 We illustrate this behavior in~\cref{fig:graph}. In the upper part (pink) the starting point $u_0$ directly lies in a cycle containing two nodes $u_0$ and $v_0$ and the algorithm stops after the first cycle verification at step $t=1$.
The bottom part shows a more complex case of \orange{convergence basin}, where several starting points $u_1$, $u_2$, $u_3$, $u_4$ lead to resp. two nodes $v_1$ and $v_2$ in $I^2$. Following the path to the root of the arborescence, and updating $U$ and $V$ along the way, the algorithm finds a cycle between $u_1$ and $v_1$ at timestep $t=1$. From $5$ initial pixel positions, the algorithm returned a unique reciprocal correspondence.

Note that it is possible to artificially build a graph that maximizes the number of NN queries thus impacting the computational efficiency, but these are very unlikely in practice as seen in Figure 2 (center) of the main paper. The number of active samples, \eg samples that did not reach a cycle, quickly drops to $0$ after only $6$ iterations, leading to a significant speed-up in computation (right).

\begin{proposition}
Starting from $k \ll HW$ samples, the FRM algorithm recovers a subset $\mathcal{M}_k$ of all possible reciprocal correspondences of cardinality $|\mathcal{M}_k|=j \le k$.
\end{proposition}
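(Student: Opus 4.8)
The plan is to show the two claimed bounds separately: (i) that $|\mathcal{M}_k| \le k$, and (ii) that every element of $\mathcal{M}_k$ is a genuine reciprocal correspondence, i.e.\ a member of the full set $\mathcal{M}$ defined in~\cref{eq:corres2}. Claim (ii) is essentially already in hand: the preceding corollary guarantees that for any starting pixel, the FRM walk terminates at a root cycle, and the lemma establishes that this root cycle is a reciprocal pair (a mutual nearest-neighbor cycle of even length, almost surely of length two). So each pair added to $\mathcal{M}_k$ by the algorithm satisfies $j=\N{2}(\D{1}_i)$ and $i=\N{1}(\D{2}_j)$, hence lies in $\mathcal{M}$. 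This gives $\mathcal{M}_k \subseteq \mathcal{M}$, so the recovered set is indeed a subset of all reciprocal correspondences.

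For claim (i), the key observation is a bookkeeping / monotonicity argument on the active set. I would argue that the map $n \mapsto$ (the cycle that sample $U^0_n$ converges to) is well-defined: by the lemma, each starting node $U^0_n$ lies in exactly one subgraph $\mathcal{G}^i$, which has a unique root cycle (by~\cref{prop:single_cycle}), and the deterministic NN rule forces the walk from $U^0_n$ to follow a single path to that root. Thus the total number of \emph{distinct} reciprocal pairs the algorithm can ever emit is at most the number of distinct starting samples, which is $k$. Formally: $\mathcal{M}_k = \bigcup_t \mathcal{M}_k^t$, and the union over all $t$ of the detected cycles is the image of the finite set $\{U^0_n\}_{n=1}^k$ under the (partial) ``converges-to'' map, hence has cardinality $\le k$. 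Setting $j := |\mathcal{M}_k|$ gives $j \le k$, and since $j$ is a cardinality it is a nonnegative integer, completing the statement.

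The one subtlety I would be careful about is double-counting versus filtering. The algorithm removes converged pixels from $U^{t+1}$ and also checks/filters $V^{t+1}$ against $V^t$, so a single root cycle could in principle be reached from several starting samples (exactly the ``convergence basin'' picture of~\cref{fig:graph}), but it is only ever \emph{reported once} because once the cycle is detected its nodes leave the active set. So the map from starting samples to reported cycles is many-to-one, and $|\mathcal{M}_k|$ counts the image, not the domain --- which is precisely why the bound is $\le k$ rather than $= k$. I expect this ``image of a finite set'' framing to be the cleanest route; the main (minor) obstacle is just stating it carefully enough that the reader sees the removal/filtering steps do not create spurious extra pairs and do not cause a pair to be missed, both of which follow from the uniqueness of the root cycle in each $\mathcal{G}^i$.
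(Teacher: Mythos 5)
Your proof is correct and follows essentially the same route as the paper's: each of the $k$ seeds lies in a unique sub-graph with a unique root cycle (which is a reciprocal pair by the preceding lemma), and seeds merging into shared convergence basins make the seed-to-cycle map many-to-one, giving $|\mathcal{M}_k| \le k$. Your ``image of a finite set under the converges-to map'' phrasing and the explicit note that filtering prevents double-counting are slightly more careful than the paper's informal wording, but the underlying argument is identical.
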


\begin{proof}
This fact comes trivially from the $k$ sparse initial samples $U$. As explained before, $\mathcal{G}$ is composed of at most $HW$ sub-graphs $\mathcal{G}^i$. Because we initialize the algorithm with $k \ll HW$ seeds, these can at most span $k$ sub-graphs each leading to a single reciprocal match. Due to the potential presence of \orange{convergence basins}, as seen in~\cref{fig:graph}, samples can merge along the paths to their root cycles, decreasing the final number of reciprocals and explaining the inequality $j \le k$. 
\end{proof}

\begin{figure*}[h!]
    \centering
    \includegraphics[width=0.7\linewidth]{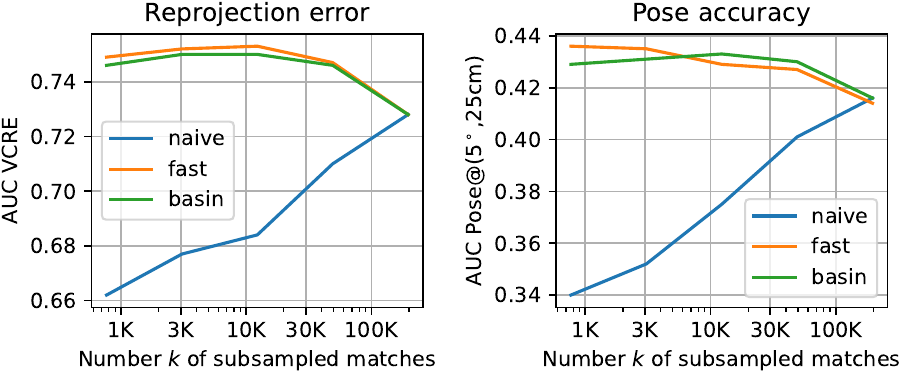}
    \caption{Comparison of the performance on the Map-free benchmark (validation set) for different subsampling approaches: `naive' denotes the random uniform subsampling of the original full set of reciprocal matches; `fast' denotes the proposed fast reciprocal matching; and `basin' denotes random subsampling weighted by the size of the convergence basin. The `fast' and `basin' strategies perform similarly whereas naive subsampling leads to catastrophic results. }
    \label{fig:cmp_subsampling}
\end{figure*}

\subsection{Performance improves with fast~matching}

As observed in Figure~2 of the main paper, FRM significantly improves the performance.
In the minimal example we provide in~\cref{fig:graph}, it is clearly visible that the FRM provides a sampling biased towards finding reciprocal matches with large basins (bottom), since a greater number of initial samples can fall onto them compared to small basins (top). 
Note that the size of the basin is inversely proportional to the maximal density of reciprocal matches.
Interestingly with the FRM, this results in a more homogeneous distribution (\ie spatial coverage) of reciprocal matches than the full matching, as depicted in~\cref{fig:matching_density}.  
As a direct consequence of a more homogeneous spatial coverage, RANSAC is able to better estimate epipolar lines than when lots of points are packed together in a small image region, which in turn provides better and more stable pose estimates.

In order to demonstrate the effect of basin-biased sampling, we propose to compute the full correspondence set $\mathcal{M}$ (\cref{eq:corres2}) and to subsample it in two ways: first, we naively subsample it randomly to reach the same number of reciprocals as the FRM. Second, we compute the size of each basin (as shown in~\cref{fig:basins}) and we bias the subsampling using the sizes. We report the results of this experiment in~\cref{fig:cmp_subsampling}. While random subsampling results in catastrophic performance drops, basin-biased sampling actually increases the performance compared to using the full graph (rightmost datapoint). As expected, the FRM algorithm provides a performance that closely follows biased subsampling, yet by only a fraction of the compute compared to basin-biased sampling which requires to compute all reciprocal matches in order to measure basin sizes. Importantly, these observations hold for both reprojection error and pose accuracy, regardless of the variant of RANSAC used to estimate relative poses.

\begin{table*}[h!]
    \caption{Coarse matching compared to Coarse-to-Fine for the tasks of visual localization on Aachen Day-Night (left) and MVS reconstruction on the DTU dataset (right).    %
    }
    \label{tab:c2f_combo_table}
    \begin{center}
    \small
    \resizebox{0.53\textwidth}{!}{
    \begin{tabular}{lccc}
    \toprule
    Methods & Coarse-to-Fine & Day & Night       \\
    \midrule
    \master~top1 & $\times$ & 74.9/90.3/98.5 & 55.5/82.2/95.8  \\
    \master~top1 & $\checkmark$ & 79.6/93.5/98.7 & 70.2/88.0/97.4 \\
    \midrule
    \master~top20 & $\times$ & 80.8/93.8/99.5 & 74.3/92.1/100 \\
    \master~top20 & $\checkmark$ & 83.4/95.3/99.4 & 76.4/91.6/100 \\
    \bottomrule
    \end{tabular}
    }
    \resizebox{0.45\textwidth}{!}{
    \begin{tabular}{llccc}
    \toprule
    & Methods &  Acc.$\downarrow$ & Comp.$\downarrow$ & Overall$\downarrow$       \\
    \midrule
    & \duster~\cite{dust3r}  &   2.677  &  0.805  & 1.741  \\ %
    & \master{} Coarse &   0.652    &  0.592 &  0.622  \\ %
    & \master &   0.403   &  0.344  &  0.374 \\ %
    \bottomrule
    \end{tabular}
    }
    \normalsize
    \end{center}
\end{table*}

\section{Coarse-to-Fine}
\label{sec:coarse-to-fine-abl}

In this section, we showcase the important benefits of the \emph{coarse-to-fine} strategy. We compare it to \emph{coarse-only} matching, that simply computes correspondences on input images down-scaled to the resolution of the network.  

\mypar{Visual localization on Aachen Day-Night\cite{aachen}.}
For this task, the input images are of resolution $1600 \times 1200$ and $1024 \times 768$, in both landscape and portrait are downscaled to $512 \times 384$/$384 \times 512$.
We report the percentage of successfully localized images within three thresholds: (0.25m, 2°), (0.5m, 5°) and (5m, 10°) in~\cref{tab:c2f_combo_table} (left).
We observe significant performance drops when using coarse matching only, by up to 15\% in top1 on the Night split.

\mypar{MVS.}
The input images of the DTU dataset~\cite{dtu} are of resolution $1200 \times 1600$ downscaled to $384 \times 512$.
As in the main paper, we report here the accuracy, completeness and Chamfer distance of triangulated matches obtained with \master{}, in the \emph{coarse-only} and \emph{coarse-to-fine} settings in~\cref{tab:c2f_combo_table} (right).
While coarse matching still outperforms the direct regression of \duster{}, we see a clear drop in reconstruction quality in all metrics, nearly doubling the reconstruction errors.

\section{Detailed experimental settings}

In our experiments, we set the confidence loss weight $\alpha=0.2$ as in \cite{dust3r}, the matching loss weight $\beta=1$, local feature dimension $d=24$ and the temperature in the InfoNCE loss to $\tau=0.07$.
We report the detailed hyper-parameter settings we use for training \master{} in Table~\ref{tab:training_step234}.

\begin{table}[h]
    \caption{\textbf{Detailed hyper-parameters} for the training}
    \label{tab:training_step234}
    \centering
    \resizebox{0.9\linewidth}{!}{
    \begin{tabular}{l@{\hskip 1.0cm}l}
    \specialrule{1.5pt}{0.5pt}{0.5pt} 
    Hyper-parameters & fine-tuning  \\
    \midrule
    Optimizer & AdamW \\
    Base learning rate & 1e-4 \\
    Weight decay & 0.05 \\
    Adam $\beta$ & (0.9, 0.95 )\\
    Pairs per Epoch & 650k \\
    Batch size & 64 \\
    Epochs & 35 \\
    Warmup epochs & 7 \\
    Learning rate scheduler & Cosine decay \\
    \specialrule{1pt}{0pt}{0pt} 
    Input resolutions & $512{\times}384$, $512{\times}336$ \\
                                       & $512{\times}288$, $512{\times}256$ \\
                                       & $512{\times}160$ \\
    \midrule
    Image Augmentations & Random crop, color jitter \\
    \midrule
    Initialization & \duster{}~\cite{dust3r} \\
    \specialrule{1.5pt}{0.5pt}{0.5pt} 
    \end{tabular}
    }
\end{table}

{
    \small
    \bibliographystyle{ieeenat_fullname}
    \bibliography{main}
}

\end{document}